\title{The Perturbed Variation}
\author{
Maayan Harel \\
Department of Electrical Engineering\\
Technion, Haifa, Israel\\
\texttt{maayanga@tx.technion.ac.il} \\
\And
Shie Mannor \\
Department of Electrical Engineering \\
Technion, Haifa, Israel\\
\texttt{shie@ee.technion.ac.il} \\
}
\pgfplotsset{every axis/.append style={thick}}
\begin{document}

\makeatletter
\theoremstyle{plain}
\newtheorem*{rep@theorem}{\rep@title}
\newcommand{\newreptheorem}[2]{%
\newenvironment{rep#1}[1]{%
 \def\rep@title{#2 \ref{##1}}%
 \begin{rep@theorem}}%
 {\end{rep@theorem}}}
\makeatother

\newtheorem{theorem}{Theorem}
\newreptheorem{theorem}{Theorem}
\newtheorem{Proof}{Proof}
\newtheorem{lemma}[theorem]{Lemma}
\newtheorem{proposition}[theorem]{Proposition}
\newtheorem{corollary}[theorem]{Corollary}
\newtheorem{definition}[theorem]{Definition}
\newtheorem{assumption}{Assumption}
\newtheorem{remark}{Remark}
\newtheorem{procedure}{Procedure}

\newcommand{\PV}{\text{PV}}
\newcommand{\PVh}{\widehat{\text{PV}}}
\newcommand{\PPV}{\widehat{\text{PPV}}}
\newcommand{\bd}{\bf{d}}
\newcommand{\brck}[1]{\ensuremath{\left( #1 \right)}}

\maketitle

\begin{abstract}
We introduce a new discrepancy score between two distributions that gives an indication on their \emph{similarity}.
While much research has been done to determine if two samples come from exactly the same
distribution, much less research considered the problem of determining if two finite samples come from similar distributions.
The new score gives an intuitive interpretation of similarity;  it optimally perturbs the distributions so that they best fit each other. The score is defined between distributions, and can be efficiently estimated from samples. We provide convergence bounds of the estimated score, and develop hypothesis testing procedures that test if two data sets come from similar distributions. The statistical power of this procedures is presented in simulations. We also compare the score's capacity to detect similarity with that of other known measures on real data.  
\end{abstract}

\section{Introduction \label{sec_Intro}}

The question of similarity between two sets of examples is common to many fields, including statistics, data mining, machine learning and computer vision. 
For example, in machine learning, a standard assumption is that the training and test data are generated from the same distribution. However, in some scenarios, such as Domain Adaptation (DA), this is not the case and the distributions are only assumed similar. It is quite intuitive to denote when two inputs are similar in nature, yet the following question remains open: given two sets of examples, how do we test whether or not they were generated by similar distributions?    
The main focus of this work is providing a similarity score and a corresponding statistical procedure that gives one possible answer to this question.

Discrepancy between distributions has been studied for decades, and a wide variety of distance scores have been proposed. However, not all proposed scores can be used for testing similarity. The main difficulty is that most scores have not been designed for statistical testing of similarity but equality, known as the Two-Sample Problem (TSP). Formally, let $P$ and $Q$ be the generating distributions of the data; the TSP tests the null hypothesis $H_0:P=Q$ against the general alternative $H_1:P\neq Q$. This is one of the classical problems in statistics. However, sometimes, like in DA, the interesting question is with regards to similarity rather than equality. By design, most equality tests may not be transformed to test similarity; see Section \ref{sec_relatedWork} for a review of representative works.   

%When the scores are metrics, similarity follows from definition. Namely, similarity may be understood as a small distance value.
In this work, we quantify similarity using a new score, the Perturbed Variation (PV).
We propose that similarity is related to some predefined value of permitted variations. Consider the gait of two male subjects as an example. If their physical characteristics are similar, we expect their walk to be similar, and thus assume the examples representing the two are from similar distributions.  This intuition applies when the distribution of our measurements only endures small changes for people with similar characteristics. Put more generally, similarity depends on what ``small changes'' are in a given application, and implies that similarity is domain specific. The PV, as hinted by its name, measures the discrepancy between two distributions while allowing for some perturbation of each distribution; that is, it allows small differences between the distributions. What accounts for small differences is a parameter of the PV, and may be defined by the user with regard to a specific domain. Figure \ref{fig:PVillustration} illustrates the PV. 
Note that, like perceptual similarity, the PV turns a blind eye to variations of some rate.

\begin{figure}[t]
  \begin{center}
    \includegraphics[ trim =1cm 3.5cm 0cm 2cm ,clip, width=0.33\textwidth] {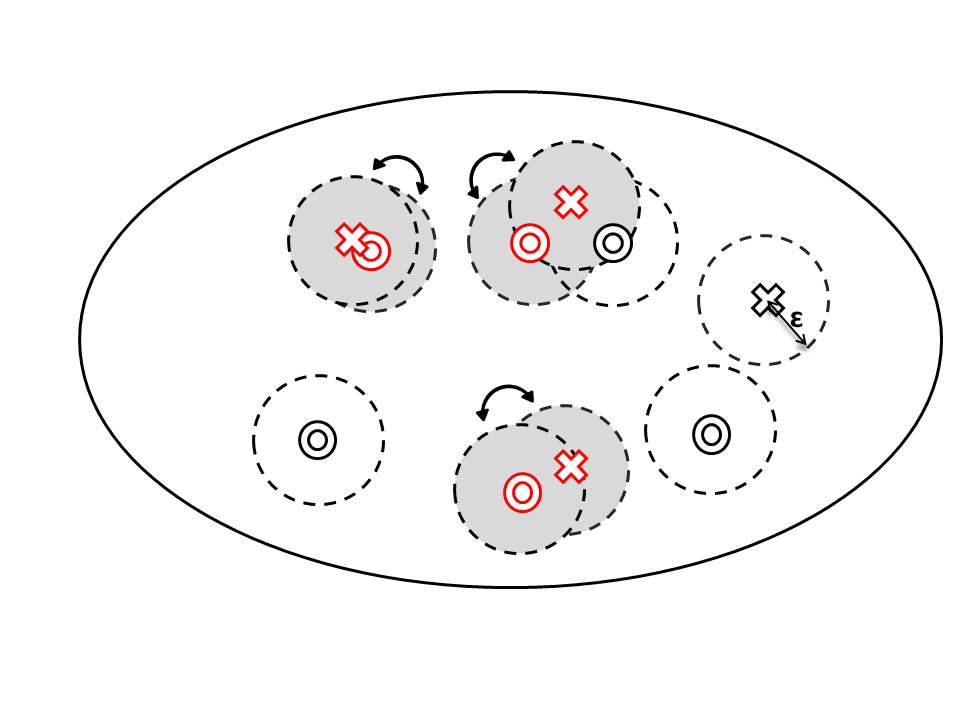}
  \end{center}
\vspace{-0.3cm}
\caption{ \small{X and O identify samples from two distributions, doted circles denote allowed perturbations. Samples marked in red are matched with neighbors, while the unmatched samples indicate the PV discrepancy.  }\label{fig:PVillustration}}
\end{figure}

\section{The Perturbed Variation \label{Sec_PV}}
% A well known distance between two distributions is the Total Variation distance (TV). The main problem of using the TV as a similarity measure, is that it is overly sensitive and overestimates the distance between two distributions [add ref]. 
% For example, consider two distributions defined by the dirac delta functions $\delta(a)$ and $\delta(a+\epsilon)$. For any $\epsilon$, the TV between the two distributions is 1, while they are intuitively similar. 

The PV on continuous distributions is defined as follows:
\begin{definition}
Let $P$ and $Q$ be two distributions  on a Banach space $\mathcal{X}$, and let $M(P,Q)$ be the set of all joint distributions on $\mathcal{X}\times \mathcal{X}$ with mariginals $P$ and $Q$.
The PV, with respect to a distance function $\bd :\mathcal{X}\times \mathcal{X} \rightarrow \mathbb{R}$ and $\epsilon$, is defined by
\begin{align}\label{PV}
 \PV(P,Q,\epsilon,\bd) \doteq &\inf_{\mu\in M(P,Q)}  \mathbb{P}_{\mu}[d(X,Y)>\epsilon] ,
\end{align}
over all pairs $(X,Y)\sim \mu $, such that the marginal of $X$ is $P$ and the marginal of $Y$ is $Q$.
\end{definition}

Put into words, Equation (\ref{PV}) defines the joint distribution $\mu$ that couples the two distributions such that the probability of the event of a pair $(X,Y)\sim \mu $ being within a distance grater than $\epsilon$ is minimized.   

The solution to (\ref{PV}) is a special case of the classical mass transport problem of Monge \cite{Monge1781} and its version by Kantorovich:
 $\inf_{\mu\in M(P,Q)} \int_{\mathcal{X}\times \mathcal{X}} c(x,y)d\mu(x,y),$
where $c:\mathcal{X}\times\mathcal{X} \rightarrow \mathbb{R} $ is a measurable cost function.
When $c$ is a metric, the problem describes the $1^{st}$ Wasserstein metric.
Problem (\ref{PV}) may be rephrased as the optimal mass transport problem with the cost function $c(x,y) = 1_{[d(x,y)>\epsilon]}$,
and may be rewritten as  $\inf_{\mu} \iint 1_{[d(x,y)>\epsilon]}\mu(y|x)dy\, P(x)dx$. The probability $\mu(y|x)$ defines the transportation plan of $x$ to $y$. The PV optimal transportation plan is obtained by perturbing the mass of each point $x$ in its $\epsilon$ neighborhood  so that it redistributes to the distribution of $Q$. These small perturbations do not add any cost, while transportation of mass to further areas is equally costly. Note that when $P=Q$ the PV is zero as the optimal plan is simply the identity mapping.  
Due to its cost function, the PV it is not a metric, as it is symmetric but does not comply with the triangle inequality and may be zero for distributions $P\neq Q$. 
Despite this limitation, this cost function fully quantifies the intuition that small variations should not be penalized when similarity is considered.   
In this sense, similarity is not unique by definition, as more than one distribution can be similar to a reference distribution.

The PV is also closely related to the Total Variation distance  (TV) that may be written, using a coupling characterization, as
$TV(P,Q) = \inf_{\mu\in M(P,Q)} \mathbb{P}_{\mu}\left[X \neq Y \right]$  \cite{ruschendorf2007monge}.
This formulation argues that any transportation plan, even to a close neighbor, is costly. Due to this property, the TV is known to be an overly sensitive measure that overestimates the distance between distributions. For example, consider two distributions defined by the dirac delta functions $\delta(a)$ and $\delta(a+\epsilon)$. For any $\epsilon$, the TV between the two distributions is 1, while they are intuitively similar.  The PV resolves this problem by adding perturbations, and therefore is a natural extension of the TV. Notice, however, that the $\epsilon$ used to compute the PV need not be infinitesimal, and is defined by the user.

The PV can be seen as a conciliatory between the Wasserstein distance and the TV. As explained, it relaxes the sensitivity of the TV; however, it does not ``over optimize" the transportation plan. Specifically, distances larger than the allowed perturbation are discarded. This aspect also contributes to the efficiency of estimation of the PV from samples; see Section \ref{estimationPV}.

\subsection{ The Perturbed Variation on Discrete Distributions}
It can be shown that for two discrete distributions Problem (\ref{PV}) is equivalent to the following problem.
\begin{definition}
Let $\mu_1$ and $\mu_2$ be two discrete distributions on the unified support $\{a_1,...,a_N\}$. Define the neighborhood of $a_i$ as $\text{ng}(a_i,\epsilon) = \{ z \, ; d(z,a_i) \leq \epsilon \}$.
The $\PV(\mu_1,\mu_2,\epsilon, \bd )$ between the two distributions is:
\vspace{-0.4cm}
\begin{align}\label{PVdis}
& \min _{ w_i\geq 0,v_i\geq 0,Z_{ij}\geq 0}  \frac{1}{2}\sum_{i=1}^{N}w_i + \frac{1}{2}\sum_{j=1}^{N}v_j \\
  \text{s.t.} & \sum_{a_j\in \text{ng}(a_i,\epsilon)} Z_{ij}+w_i = \mu_1(a_i) , \small{\forall i}  \nonumber \\
& \sum_{a_i\in \text{ng}(a_j,\epsilon)} Z_{ij}+v_j   = \mu_2(a_j) ,  \small{\forall j}  \nonumber \\
& \quad Z_{ij} = 0  \, , \quad \small{\forall(i,j)\not\in\text{ng}(a_i,\epsilon)}. \nonumber 
\end{align}
\vspace{-0.5cm}
\end{definition}
%\begin{align}\label{PVdis}
% & \min _{ w_i\geq 0,v_i\geq 0,Z_{ij}\geq 0}  \frac{1}{2}\sum_{i=1}^{N}w_i + \frac{1}{2}\sum_{j=1}^{N}v_j \\
% &  f(a_i)= \mu_1(a_i) ,\,  h(a_j)= \mu_2(a_j) ,  \small{\forall i,j}  \nonumber \\
% &  Z_{ij} = 0  \, , \,\quad\quad\quad\quad\quad\quad\quad\quad\quad\,\,\,\,  \small{\forall(i,j)\not\in\text{ng}(a_i,\epsilon)} \nonumber 
% \end{align}
% where $f(a_i) = \sum_{a_j\in \text{ng}(a_i,\epsilon)} Z_{ij}+w_i$ and $h(a_j) = \sum_{a_i\in \text{ng}(a_j,\epsilon)} Z_{ij}+v_j.$
Each row in the matrix $Z\in \mathbb{R}^{N\times N}$ corresponds to a point mass in $\mu_1$, and each column to a point mass in $\mu_2$. For each $i$,  $Z(i,:)$ is zero in columns corresponding to non neighboring elements, and non-zero only for columns $j$  for which transportation between $\mu_2(a_j)\rightarrow \mu_1(a_i)$ is performed. The discrepancies between the distributions are depicted by the scalars $w_i$ and $v_i$ that count the ``leftover" mass in $\mu_1(a_i)$ and $\mu_2(a_j)$. The objective is to minimize these discrepancies, therefore matrix $Z$ describes the optimal transportation plan constrained to $\epsilon$-perturbations. An example of an optimal plan is presented in Figure \ref{PV_example}.
\begin{floatbox}[t]
\label{PV_example}
\centering
\fbox{
\begin{minipage}[t]{0.33\textwidth}
\small{
$ PV(\mu_1,\mu_2,\epsilon) = \frac{1}{2}$: \\
\vspace{0.05cm}

$a_1=0,a_2=1,a_3=2,a_4=2.1$\\
$w_1 = w_2 = \frac{1}{4},w_3=w_4=0$\\
$v_4 = \frac{1}{2} ,v_1=v_2=v_3=0$ 

$ Z =  \left[ \begin{array}{cccc}
0 & 0 & 0 & 0 \\
0 & \frac{1}{4} & 0 & 0 \\
0 & 0 & 0 & \frac{1}{4} \\
0 & 0 & 0 & 0 \\
 \end{array} 
\right]$
}
\end{minipage}\hspace{1cm}
\begin{minipage}[t]{0.35\textwidth}
\vspace{1pt}
\begin{tikzpicture}[scale=0.55]
\begin{axis}[
xmin=-0.1,xmax=3.5,
ymin=0,ymax=1,
enlargelimits=false,
xtick=data,
ytick={0.25,0.5,0.75},
]
\addplot+[ycomb,line width=2pt] plot coordinates
{(0,0.25) (1,0.5) (2,0.25)};
\addplot+[ycomb,line width=2pt] plot coordinates
{(1,0.25) (2.1,0.75)};
\legend{$\mu_1$,$\mu_2$}
\end{axis}
\node at (4,4.8) {$\epsilon$};
\draw [<->] (3.8,4.5) -- (4.2,4.5);
\node at (1.1,0.9) {$\geq \epsilon$};
\draw [<->] (0.25,0.5) -- (2,0.5);
\end{tikzpicture}
\end{minipage}}
\caption{Illustration of the PV score between discrete distributions.}
\end{floatbox} 
%This may be easier to see by observing that, as the total mass of $\mu_1$ and $\mu_2$ is equal, for any solution of (\ref{PVdis}) we have that  $\sum_{i=1}^{N(\nu)}w_i = \sum_{j=1}^{N(\nu)}v_j.$  Therefore, the objective may be rewritten simply by %$\sum_{i=1}^{N}w_i$ or $\sum_{j=1}^{N}v_j$.
%We keep the two components objective to facilitate a better ground for comparison to the estimated PV, presented in Section \ref{estimationPV}.

\subsection{Estimation of the Perturbed Variation \label{estimationPV}}
Typically, we are given samples from which we would like to estimate the PV.
Given two samples $S_1 =\{x_1,...,x_n\}$ and $S_2 =\{y_1,...,y_m\}$,  generated by distributions $P$ and $Q$ respectively, $\PVh(S_1,S_2,\epsilon,d)$ is:
\vspace{-0.5cm}
% \begin{align}\label{PVhat}
% & \min _{ w_i\geq 0,v_i\geq 0,Z_{ij}\geq 0}  \frac{1}{2n}\sum_{i=1}^{n}w_i + \frac{1}{2m}\sum_{j=1}^{m}v_j \\
% &  \sum_{y_j\in \text{ng}(x_i,\epsilon)} Z_{ij}+w_i = 1 ,\forall i \nonumber\\
% & \sum_{x_i\in \text{ng}(y_j,\epsilon)} Z_{ij}+v_j=1 , \forall j  \nonumber \\
% & \quad\quad\quad\quad\,\, Z_{ij} = 0  \, , \,\quad\quad\,  \small{ \forall(i,j)\not\in\text{ng}(x_i,\epsilon)}, \nonumber
% \end{align}
%where  $Z\in\mathbb{R}^{n\times m}$.
\begin{align}\label{PVhat}
& \min_{w_i\geq 0,v_i\geq 0,Z_{ij}\geq 0}  \frac{1}{2n}\sum_{i=1}^{n}w_i + \frac{1}{2m}\sum_{j=1}^{m}v_j \\
\text{s.t.} &  \sum_{y_j\in \text{ng}(x_i,\epsilon)} Z_{ij}+w_i = 1 , \, \sum_{x_i\in \text{ng}(y_j,\epsilon)} Z_{ij}+v_j=1,  \quad  \forall i,j  \nonumber \\
& \quad Z_{ij} = 0  \, , \quad\quad\,   \forall(i,j)\not\in\text{ng}(x_i,\epsilon), \nonumber
\end{align}
where $Z\in\mathbb{R}^{n\times m}$.
When $n=m$, the optimization in (\ref{PVhat}) is identical to (\ref{PVdis}), as in this case the samples define a discrete distribution. However, when $n\neq m$ Problem (\ref{PVhat}) also accounts for the difference in the size of the two samples.

Problem (\ref{PVhat}) is a linear program with constraints that may be written as a totally unimodular matrix. It follows that one of the optimal solutions of (\ref{PVhat}) is integral \cite{schrijver1998theory}; that is, the mass of each sample is transferred as a whole.
This solution may be found by solving the optimal assignment on an appropriate bipartite graph \cite{schrijver1998theory}.
Let $G =(V=(A,B),E)$ define this graph, with $A = \{x_i, w_i \, ; i=1,...,n\}$ and $B = \{y_j, v_j \, ; j=1,...,m\}$ as its bipartite partition. The vertices $x_i\in A$ are linked with edge weight zero to $y_j\in \text{ng}(x_i)$ and with weight $\infty$ to $y_j\not\in \text{ng}(x_i)$. In addition, every vertex $x_i$ ($y_j$) is linked with weight 1 to $w_i$ ($v_j$). To make the graph complete, assign zero cost edges between all vertices $x_i$ and $w_k$ for $k\neq i$ (and vertices $y_j$ and $v_k$ for $k\neq j$).

We note that the Earth Mover Distance (EMD) \cite{rubner1998metric}, a sampled version of the transportation problem, is also formulated by a linear program that may be solved by optimal assignment. For the EMD and other typical assignment problems, the computational complexity is more demanding, for example using the Hungarian algorithm it has an $O(N^3)$ complexity, where $N=n+m$ is the number of vertices \cite{Ahuja93}. Contrarily, graph $G$, which describes $\PVh$, is a simple bipartite graph for which maximum cardinality matching, a much simpler problem, can be applied to find the optimal assignment.
To find the optimal assignment, first solve the maximum matching on the partial graph between vertices $x_i,y_j$ that have zero weight edges (corresponding to neighboring vertices). Then, assign vertices $x_i$ and $y_j$ for whom a match was not found with $w_i$ and $v_j$ respectively; see Algorithm \ref{Alg_PVhat} and Figure \ref{fig:PVillustration} for an illustration of a matching. It is easy to see that the solution obtained solves the assignment problem associated with $\PVh$.

\begin{algorithm}[t]
\caption{\emph{Compute $\PVh(S_1,S_2,\epsilon,{\bf d})$}}
\label{Alg_PVhat}
\begin{algorithmic}
\vspace{0.3cm}
\STATE \textbf{Input:} $S_1 = \{x_1,...,x_n\}$ and $S_2=\{y_1,...,y_m\}$, $\epsilon$ rate,  and distance measure ${\bf d}$.
\STATE  1. Define $\hat{G} =(\hat{V}=(\hat{A},\hat{B}),\hat{E})$: $\,\, \hat{A}= \{ x_i \in S_1 \}, \, \hat{B}= \{ y_j \in S_2 \},$
\STATE  $\quad$ Connect an edge $e_{ij}\in \hat{E}$  if $d(x_i,y_j)\leq \epsilon .$
\STATE 2. Compute the maximum matching on   $\hat{G}$.
\STATE 3. Define $S_w$ and $S_v$ as number of unmatched edges in sets $S_1$ and $S_2$ respectively.
\STATE \textbf{Output:} $\widehat{PV}(S_1,S_2,\epsilon,d) = \frac{1}{2}(\frac{S_w}{n}+\frac{S_v}{m}$).
\vspace{0.1cm}
\end{algorithmic}
\end{algorithm}

The complexity of Algorithm \ref{Alg_PVhat} amounts to the complexity of the maximal matching step and of setting up the graph, i.e., additional $O(nm)$ complexity of computing distances between all points.  Let $k$ be the average number of neighbors of a sample, then the average number of edges in the bipartite graph $\hat{G}$ is $|\hat{E}|=n\times k$. The maximal cardinality matching of this graph is obtained in $O(kn\sqrt{(n+m)})$ steps, in the worst case \cite{Ahuja93}.

\section{Related Work \label{sec_relatedWork}}
Many scores have been defined for testing discrepancy between distributions. We focus on representative works for nonparametric tests that are most related to our work.   
First, we consider statistics for the Two Sample Problem (TSP), i.e., equality testing, that are based on the asymptotic distribution of the statistic conditioned on the equality. Among these tests is the well known  Kolmogorov-Smirnov test (for one dimensional distributions), and its generalization to higher dimensions by minimal spanning trees \cite{friedman1979multivariate}. A different statistic is defined by the portion of k-nearest neighbors of each sample that belongs to different distributions; larger portions mean the distributions are closer \cite{schilling1986multivariate}. These scores are well known in the statistical literature but cannot be easily changed to test similarity, as their analysis relies on testing equality.  

As discussed earlier, the $1^{st}$ Wasserstein metric and the TV metric have some relation to the PV. The EMD and histogram based $L_1$ distance are the sample based estimates of these metrics respectively. In both cases, the distance is not estimated directly on the samples, but on a higher level partition of the space: histogram bins or signatures (cluster centers). As a result, these estimators have inaccuracies.  
Contrarily, the PV is estimated directly on the samples and converges to its value between the underlying continuous distributions.  
We note that after a good choice of signatures, the EMD captures perceptual similarity, similar to that of the PV.    
However, due to the abstraction to signatures, the EMD does not converge to the Wasserstein metric between the continuous distributions, and therefore is commonly used to rate distances and not for statistical testing.
It is possible to consider the PV as a refinement of the EMD notion of similarity; instead of clustering the data to signatures and moving the signatures, it perturbs each sample. In this manner it captures a finer notion of the perceptual similarity.   
  
The partition of the support to bins allows some relaxation of the TV notion. Therefore, instead of the TV, it may be interesting to consider the $L_1$ as a similarity distance on the measures after discretization.  
The example in Figure (\ref{fig:hists}) shows that this relaxation is quite rigid and that there is no single partition that captures the perceptual similarity. 
In general, the problem would remain even if bins with varying width were permitted. Namely, the problem is the choice of a single partition to measure similarity of a reference distribution to multiple distributions, while choosing multiple partitions would make the distances incomparable. Also note that defining a ``good'' partition is a difficult task, which is exasperated in higher dimensions.

The last group of statistics are scores established in machine learning: the $d_A$ distance presented by Kifer et al. that is based on the maximum discrepancy on a chosen subset of the support \cite{kifer2004detecting}, and Maximum Mean Discrepancy (MMD) by Gretton et al., which define discrepancy after embeddings the distributions to a Reproducing Kernel Hilbert Space (RKHS)\cite{gretton2007kernel}. These scores have corresponding statistical tests for the TSP; however, since their analysis is based on finite convergence bounds, in principle they may be modified to test similarity.   The $d_A$ captures some intuitive notion of similarity, however, to our knowledge, it is not known how to compute it for a general subset class \footnote{Most work with the $d_A$ has been with the subset of characteristic functions, and approximated by the error of a classifier.}.
The MMD captures the distance between the samples in some RKHS. While this distance perfectly defines an equality test,  it is not clear if it translates to a well defined similarity test. As an example, consider testing if the MMD is grater than some value larger than zero using the RBF kernel. To do so, the parameter $\sigma$ must be chosen in advance. Clearly, the result of the test is highly dependent on this choice, but it is not clear how it should be made. Contrarily, the PV's parameter $\epsilon$ is related to the data's input domain and may be chosen accordingly.     

\begin{figure}[t]
\centering
\subfigure[$\PV(\epsilon=0.1)=0$]{
\includegraphics[trim = 0cm 6cm 0cm 6cm, clip,scale=0.15]{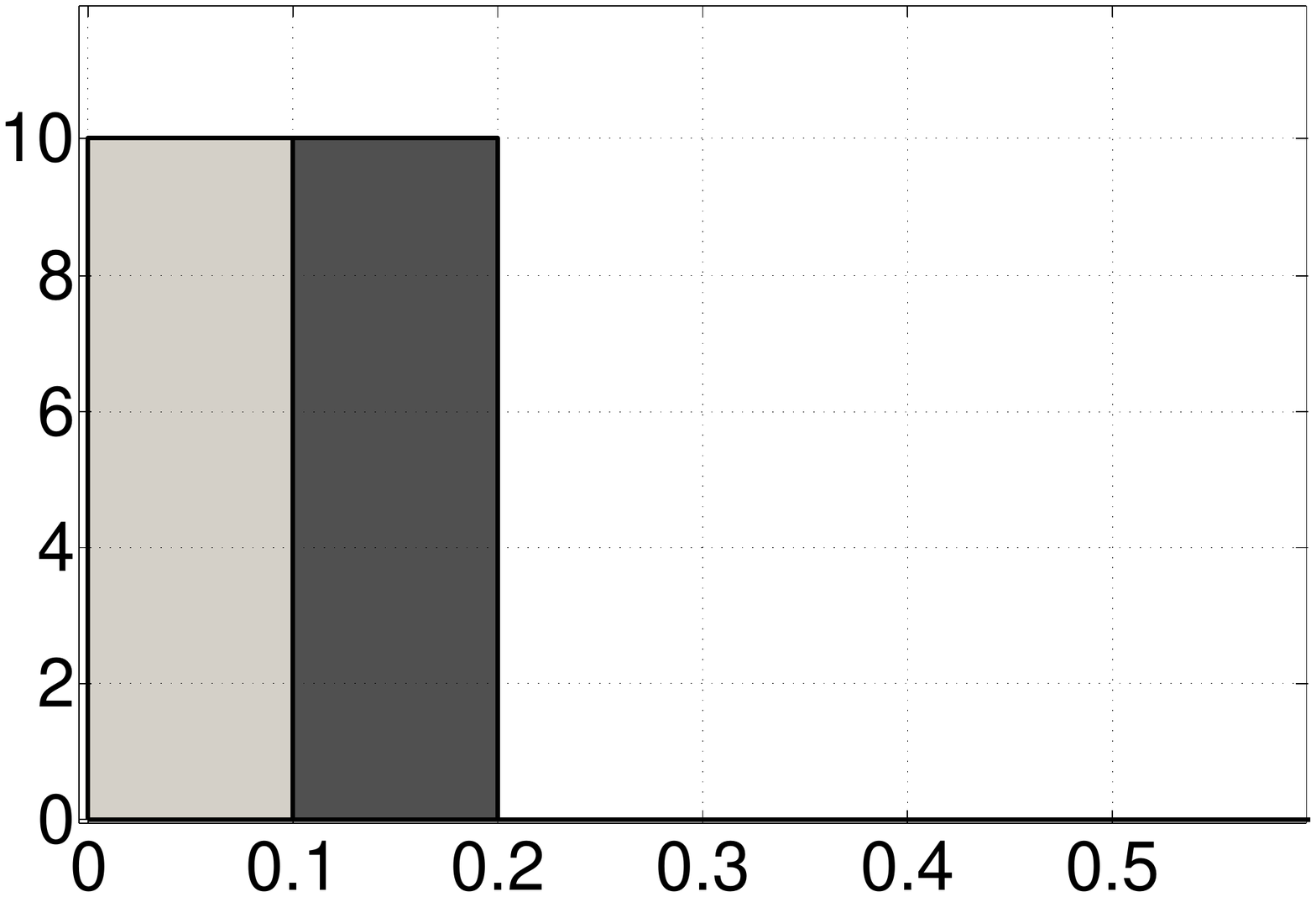}
\label{fig:hist1}
}
\subfigure[$\PV(\epsilon=0.1)=0$]{
\includegraphics[trim = 0cm 6cm 0cm 6cm, clip,scale=0.15]{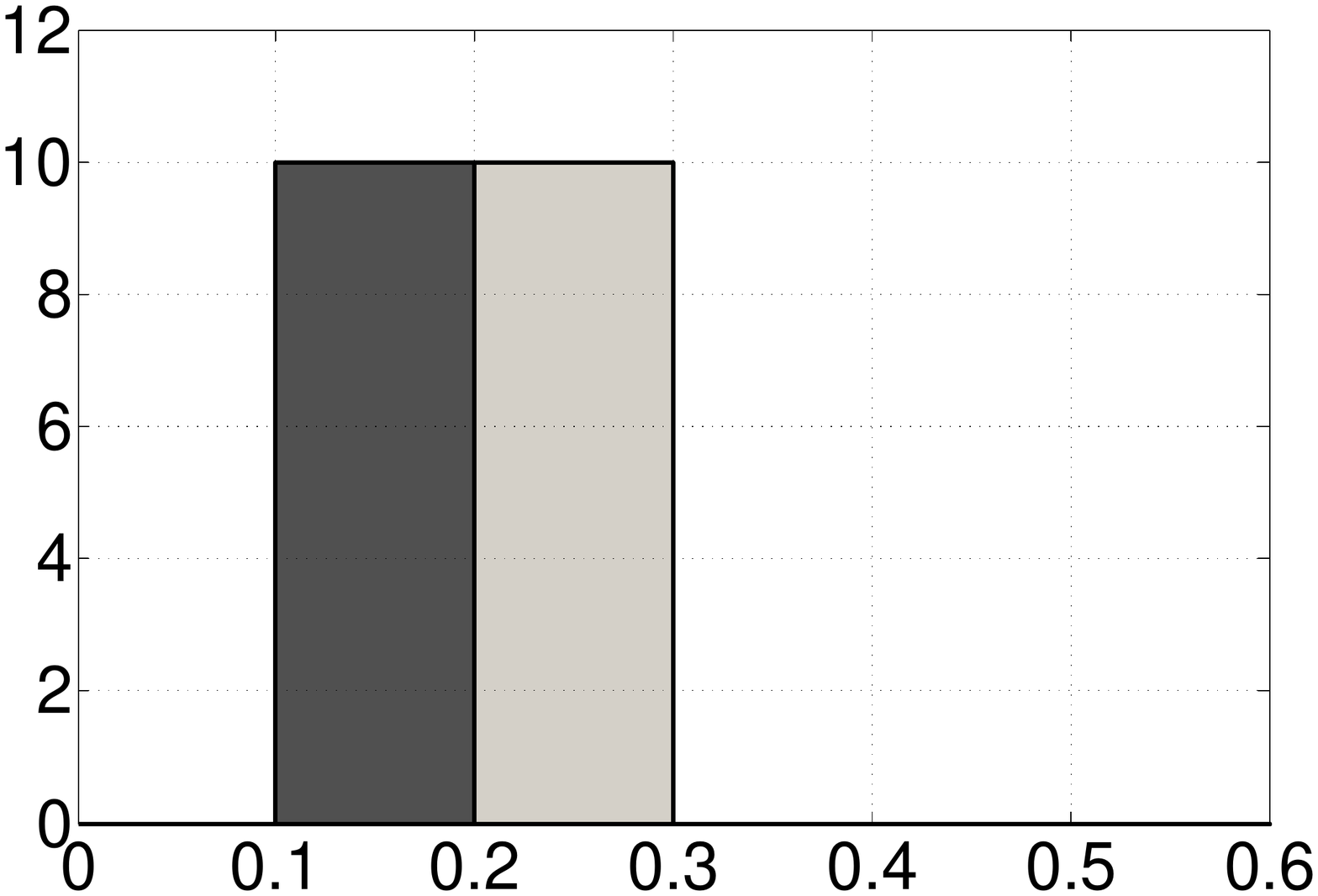}
\label{fig:hist2}
}
\subfigure[$\PV(\epsilon=0.1)=1$]{
\includegraphics[trim = 0cm 6cm 0cm 6cm, clip,scale=0.15]{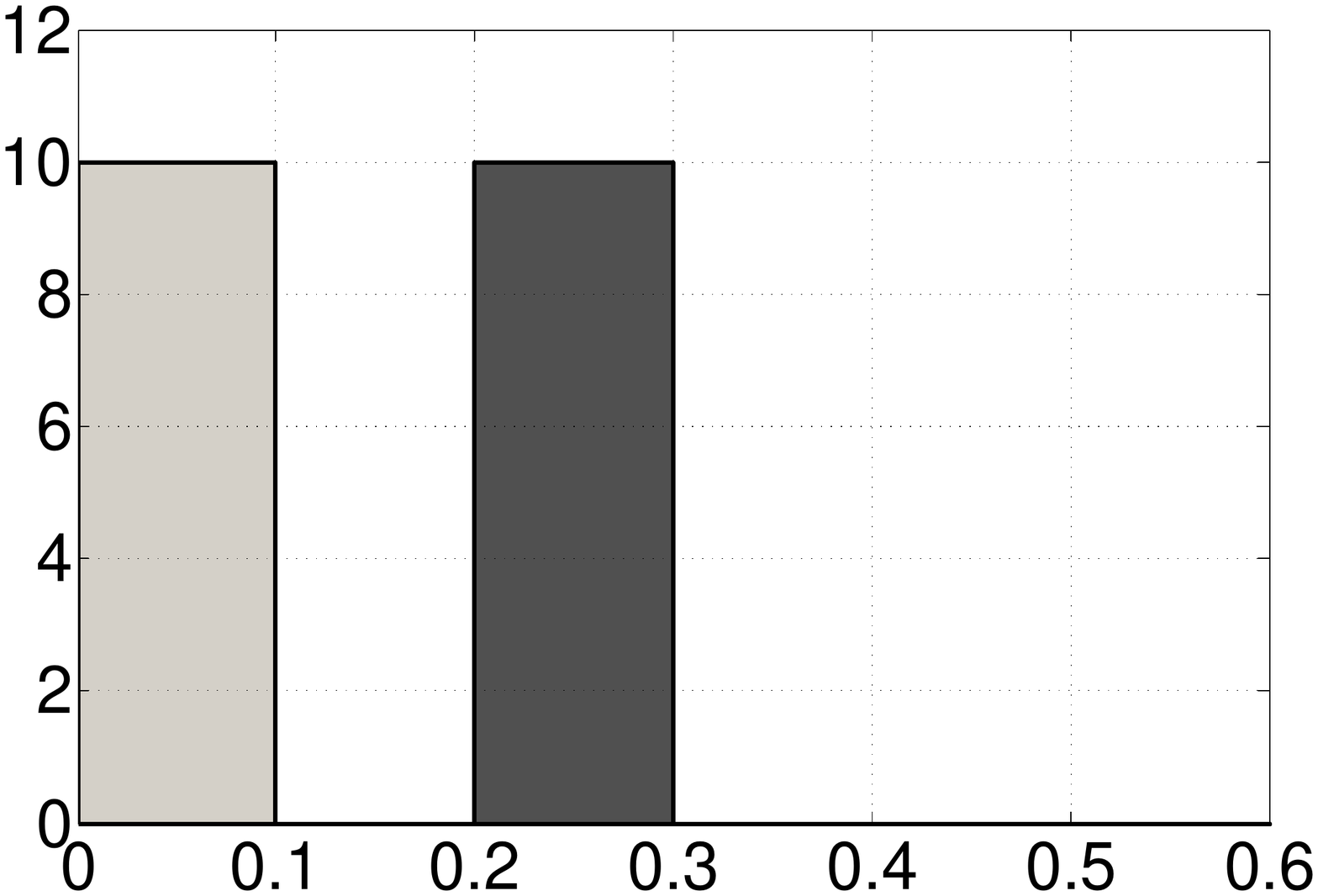}
\label{fig:hist3}
}
\vspace{-0.4cm}
\caption{ \small{Two distributions on $\mathbb{R}$: The PV captures the perceptual similarity of (a),(b) against the disimilarity in (c). The $L_1^1=1$ on $I_1=\{(0,0.1),(0.1,0.2),...\}$ for all cases; on $I_2=\{(0,0.2),(0.2,0.4),...\}$ it is $L_1^2(P_a,Q_a)= 0,L_1^2(P_b,Q_b)= 1,L_1^2(P_c,Q_c)=1$; and on $I_3=\{(0,0.3),(0.3,0.6),...\}$ it is $L_1^3(P_a,Q_a)= 0,L_1^3(P_b,Q_b)= 0,L_1^3(P_c,Q_c)=0$.}\label{fig:hists}}
\vspace{-0.4cm}
\end{figure}

\section{Analysis \label{Sec_Analysis} }
We present sample rate convergence analysis of the PV. The proofs of the theorems are provided in the supplementary material.  
When no clarity is lost, we omit ${\bf d}$ from the notation.
Our main theorem is stated as follows: 
\begin{theorem} \label{Thm_PVtailsInftyDist}
Suppose we are given two i.i.d. samples $S_1 =\{x_1,...,x_n\}\in \mathbb{R}^d$ and $S_2 =\{y_1,...,y_m\}\in \mathbb{R}^d$  generated by distributions $P$ and $Q$, respectively. Let the ground distance be ${\bf d}= \Vert\cdot\Vert_\infty$ and let $\mathcal{N}(\epsilon)$ be the cardinality of a disjoint cover of the distributions' support. Then,
for any $\delta\in(0,1)$, $N = \min(n,m)$, and $\eta= \sqrt{\frac{2(\log(2(2^{\mathcal{N}(\epsilon)}-2))+\log(1/\delta))}{N}}$ we have that
\begin{align*}
\mathbb{P} \left( \left|\PVh\brck{S_1,S_2,\epsilon}-  \PV\brck{P,Q,\epsilon}\right| \leq \ \eta \right) \geq 1-\delta.
\end{align*}
\end{theorem}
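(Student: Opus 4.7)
The plan is to rewrite both $\PV$ and $\PVh$ in a common variational form, reduce the underlying supremum to the finite class of cell unions induced by the disjoint $\epsilon$-cover, and finish with Hoeffding plus a union bound. By a Strassen-type duality for the $\epsilon$-relaxed mass-transport problem,
\[\PV(P,Q,\epsilon) \;=\; \sup_{A\text{ Borel}}\bigl(P(A)-Q(A^{\epsilon})\bigr),\]
(and its symmetrization swapping $P,Q$), where $A^{\epsilon}$ denotes the $\ell_\infty$ $\epsilon$-fattening of $A$. The empirical $\PVh$ is the value of a bipartite matching LP; Hall's deficiency formula (equivalently K\"onig's theorem) identifies it with the same variational expression applied to the empirical measures $P_n,Q_m$, so both quantities live in the same functional family.

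Next, I would use the disjoint $\epsilon$-cover $\{C_1,\dots,C_{\mathcal{N}(\epsilon)}\}$ to collapse the Borel supremum onto the finite family of $2^{\mathcal{N}(\epsilon)}$ cell unions. The key geometric fact is that, in $\ell_\infty$ with cells of diameter at most $\epsilon$, every Borel $A$ satisfies $A\subseteq\widetilde A\subseteq A^{\epsilon}$ for the cell rounding $\widetilde A:=\bigcup\{C_i:C_i\cap A\neq\emptyset\}$, which yields the inequality $P(A)-Q(A^{\epsilon})\le P(\widetilde A)-Q(\widetilde A)$ and its empirical analogue. This replaces the Borel supremum by a supremum over the $2^{\mathcal{N}(\epsilon)}-2$ non-trivial cell unions (the empty set and the full support contribute zero).

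Finally, for each cell union $U$, Hoeffding's inequality bounds $|P_n(U)-P(U)|$ and $|Q_m(U)-Q(U)|$ at the $O\!\bigl(\sqrt{\log(1/\delta')/N}\bigr)$ rate with $N=\min(n,m)$. Taking a union bound over the $2(2^{\mathcal{N}(\epsilon)}-2)$ events (both sides of the PV, for every non-trivial cell union) with $\delta'=\delta/(2(2^{\mathcal{N}(\epsilon)}-2))$ reproduces precisely the exponent inside $\eta$; combining with the reduction from the previous paragraph yields the stated two-sided bound.

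The hardest step will be the cell-union reduction. The naive inequality $P(A)-Q(A^{\epsilon})\le P(\widetilde A)-Q(\widetilde A)$ only upper-bounds $\PV$ and $\PVh$ by the same TV-type quantity on cell unions, whereas the theorem requires controlling the \emph{two-sided} gap $|\PV-\PVh|$. The careful argument must pair the variational forms of $\PV$ and $\PVh$ so that the common cell-rounding slack cancels and only the empirical-vs-population deviations of cell masses $P(U),Q(U)$ survive; equivalently, one must show that the Strassen supremum is achieved, to arbitrary precision, on cell unions in the $\ell_\infty$ geometry. Handling $n\neq m$ and the asymmetric normalizations $\tfrac{1}{2n},\tfrac{1}{2m}$ is additional but routine bookkeeping in the Hoeffding step.
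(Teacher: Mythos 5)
Your concentration step is essentially the paper's: the factor $2(2^{\mathcal{N}(\epsilon)}-2)$ does come from a union bound over nontrivial cell unions for each of the two empirical measures (the paper invokes the $L_1$ concentration inequality of Weissman et al., Lemma \ref{Tsachy}, which is proved exactly that way), so that part is fine. The genuine gap is the step you yourself flag as the hardest one, and the repair you sketch is false. With cells of $\ell_\infty$-diameter $\epsilon$, the Strassen supremum is \emph{not} attained, even approximately, on cell unions: take $d=1$, $\epsilon=1$, cells $[0,1),[1,2),[2,3)$, $P=\delta_{0.5}$, $Q=\delta_{1.6}$. Then $\sup_A\bigl(P(A)-Q(A^{\epsilon})\bigr)=1=\PV(P,Q,\epsilon)$ (witness $A=\{0.5\}$), while every cell union $U$ satisfies $P(U)-Q(U^{\epsilon})\leq 0$, because fattening a union of $\epsilon$-cells by $\epsilon$ effectively doubles the perturbation radius. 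Dropping the fattening instead (your ``naive inequality'') replaces the PV by a TV-type quantity on the cells, which can equal $1$ when $\PV=0$ (e.g.\ $P=\delta_{0.99}$, $Q=\delta_{1.01}$, $\epsilon=1$); since $\PVh$ then concentrates around that discretized TV rather than around $\PV$, neither variant controls the two-sided gap $|\PVh-\PV|$. So this is not bookkeeping: at scale $\epsilon$ the cell-union reduction simply does not hold, and no cancellation of ``common rounding slack'' rescues it.

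What is actually needed, and what the paper does, is a two-scale argument. It discretizes at a finer scale $\nu=\epsilon/T$ and sandwiches both $\PVh(S_1,S_2,\epsilon)$ and $\PV(P,Q,\epsilon)$ between discretized PVs evaluated at perturbed radii $\epsilon(T-1)/T$ and $\epsilon(T+1)/T$ (Lemma \ref{PVineq}), paying a modulus-of-continuity term $m(T)$ that is sent to zero as $T\to\infty$. Done naively, this would place $2^{\mathcal{N}(\nu)}$ rather than $2^{\mathcal{N}(\epsilon)}$ inside the logarithm and destroy the claimed rate; the paper's key technical result (Lemma \ref{lem_discConvPV}, proved via an explicit LP-feasibility/mass-reallocation construction on the refinement) shows that the difference between the PVs of the empirical and true measures on the \emph{fine} grid is bounded by the $L_1$ deviation of the empirical measures measured only on the \emph{coarse} $\epsilon$-partition, which is what keeps the union bound at $2^{\mathcal{N}(\epsilon)}-2$ per sample. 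Your proposal has no substitute for this coarse-scale control, so as written it does not yield the stated $\eta$.
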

The theorem is defined using $\Vert\cdot\Vert_\infty$, but can be rewritten for other metrics (with a slight change of constants). The proof of the theorem exploits the form of the optimization Problem \ref{PVhat}.
%To prove the theorem above, we first define a discretization on the support of the distributions. The proof of Theorem \ref{Thm_PVtailsInftyDist} is based on formulating the relations between the continuous and discrete versions of the PV. Then, turning to the discrete versions, we bound the difference of the PVs (Problem (\ref{PVdis})) of the discretisized sample (i.e. the histograms defined on the discretization) and the discretisized distributions. The proof of the theorem and required lemmas are presented in the supplementary material.
We use the bound of Theorem \ref{Thm_PVtailsInftyDist} construct hypothesis tests. A weakness of this bound is its strong dependency on the dimension. Specifically, it is dependent on $\mathcal{N}(\epsilon)$, which for $ \Vert\cdot\Vert_\infty$ is  $O((1/\epsilon)^d)$: the number of disjoint boxes of volume $\epsilon^d$ that cover the support. Unfortunately, this convergence rate is inherent; namely, without making any further assumptions on the distribution, this rate is unavoidable and is an instance of the ``curse of dimensionality". 
In the following theorem, we present a lower bound on the convergence rate.
\begin{theorem}\label{Thm_Spheres}
 Let $P=Q$ be the uniform distribution on $\mathbb{S}^{d-1}$, a unit ($d-1$)--dimensional hyper-sphere.  Let $S_1 =\{x_1,...,x_N\}\sim P$ and $S_2 =\{y_1,...,y_N\}\sim Q$ be two i.i.d. samples. For any $\epsilon,\epsilon',\delta\in(0,1)$, $0\leq\eta<2/3$ and sample size
 $ \frac{\log(1/\delta)}{2(1-3\eta/2)^2}\leq N\leq \frac{\eta}{2} e^{d(1-\frac{\epsilon^2}{2})/2},$
 we have $PV\brck{P,Q,\epsilon'}=0$ and
\begin{align}
\mathbb{P}(\PVh\brck{S_1,S_2,\epsilon} > \eta ) \geq  1-\delta.
\end{align}
\end{theorem}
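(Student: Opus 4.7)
Since $P=Q$, the identity coupling realizes $PV(P,Q,\epsilon')=0$, so all the work goes into the empirical bound. I would proceed in three steps: first rewrite $\PVh$ in terms of the maximum matching $|M|$ of $\hat{G}$; second dominate $|M|$ by the number of non-isolated vertices on the $S_2$--side and apply Hoeffding; and third plug in the spherical-cap estimate for the single-pair probability $p := \mathbb{P}_{x,y\sim P}[d(x,y)\le\epsilon]$.

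For the first step, since $n=m=N$, every matched edge removes exactly one vertex from each side of $\hat{G}$, so Algorithm~\ref{Alg_PVhat} gives $S_w = S_v = N-|M|$ and $\PVh(S_1,S_2,\epsilon) = 1 - |M|/N$. Hence it suffices to prove $\mathbb{P}(|M| \le (1-\eta)N) \ge 1-\delta$. For the second step, set $T := |\{j : \exists\, i \text{ with } d(x_i,y_j)\le \epsilon\}|$; every matched vertex on the $S_2$--side has degree at least one, so $|M|\le T$. Conditioning on $S_1$ makes $T$ a sum of $N$ i.i.d.\ Bernoulli variables with parameter $q(S_1) = Q\bigl(\bigcup_{i}\{z: d(x_i,z)\le \epsilon\}\bigr) \le N p$, the inequality a union bound valid uniformly in $S_1$. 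Hoeffding's inequality then yields
\begin{align*}
\mathbb{P}\!\left(T > N^2 p + \sqrt{\tfrac{N\log(1/\delta)}{2}}\,\Big|\,S_1\right) \le \delta,
\end{align*}
and because the right-hand bound does not depend on $S_1$, it also holds unconditionally.

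For the third step, I would invoke a spherical-cap estimate for uniform $P$ on $\mathbb{S}^{d-1}$: $d(x,y)\le\epsilon$ in Euclidean distance forces $\langle x,y\rangle\ge 1-\epsilon^2/2$, and the measure of the resulting cap is bounded by $e^{-d(1-\epsilon^2/2)/2}$. The upper hypothesis $N\le\tfrac{\eta}{2}\,e^{d(1-\epsilon^2/2)/2}$ then yields $N^2 p\le N\eta/2$, while the lower hypothesis $N\ge \log(1/\delta)/(2(1-3\eta/2)^2)$ is exactly what makes $\sqrt{N\log(1/\delta)/2}\le (1-3\eta/2)N$. Adding these estimates gives $T\le N\eta/2+(1-3\eta/2)N=(1-\eta)N$ on the Hoeffding event, whence $|M|\le(1-\eta)N$ and $\PVh\ge\eta$ on the same event.

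The step I expect to require the most care is the spherical-cap estimate: the standard sub-Gaussian cap inequality $\mathbb{P}(\langle x,y\rangle\ge t)\le e^{-dt^2/2}$ produces an exponent $d(1-\epsilon^2/2)^2/2$ rather than the stated $d(1-\epsilon^2/2)/2$, so one has to work directly with the volumetric form $(1-t^2)^{(d-1)/2}$ of the cap measure and absorb any polynomial prefactors as well as the $d$--vs--$d-1$ discrepancy into the constants in the final statement. The remaining pieces (the matching domination $|M|\le T$ and the Hoeffding tail) are routine.
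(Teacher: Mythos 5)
Your argument is essentially the paper's: the paper likewise reduces $\PVh$ to counting isolated sample points (it counts points of $S_1$ whose $\epsilon$-neighborhood in $S_2$ is empty, lower-bounds that probability by a union bound over the $N$ points of $S_2$ together with a spherical-cap measure bound $e^{-d(1-\epsilon^2/2)/2}$, and finishes with a Chernoff/Hoeffding binomial tail, mirroring your $|M|\le T$ reduction and conditional-Hoeffding step), and the two sample-size hypotheses enter in exactly the same way to give $N^2p\le N\eta/2$ and $\sqrt{N\log(1/\delta)/2}\le(1-3\eta/2)N$. The cap exponent you flag is not derived in the paper from the generic $e^{-dt^2/2}$ bound with $t=1-\epsilon^2/2$ either: the paper simply asserts, as an unproved lemma, that the radius-$\epsilon$ cap has measure at most $e^{-d(1-\epsilon^2/2)/2}$, so your proposal shares both the structure of the paper's proof and its reliance on that particular estimate.
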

For example, for $\delta=0.01,\eta=0.5$, for any $37\leq N \leq 0.25 e^{d(1-\frac{\epsilon^2}{2})/2}$  we have that $\PVh>0.5$ with probability at least 0.99. The theorem shows that, for this choice of distributions, for a sample size that is smaller than $O(e^{d})$, there is a high probability that the value of $\PVh$ is far form PV.

It can be observed that the empirical estimate $\PVh$ is stable, that is, it is almost identical for two data sets differing on one sample. 
Due to its stability, applying McDiarmid inequality yields the following. 
\begin{theorem}\label{Thm_PVstability}
Let $S_1 =\{x_1,...,x_n\}\sim P $ and $S_2 =\{y_1,...,y_m\}\sim Q$ be two i.i.d. samples. Let $n\geq m$, then for any $\eta>0$
$$ \mathbb{P} \brck{ |\PVh\brck{S_1,S_2,\epsilon} -\mathbb{E}[\PVh\brck{n,m,\epsilon}]|\geq \eta }\leq e^{-\eta^2 m^2/4n},$$
where $\mathbb{E}[\PVh\brck{n,m,\epsilon}]$ is the expectation of $\PVh$ for a given sample size.
\end{theorem}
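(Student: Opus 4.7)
The plan is to prove this as a direct application of McDiarmid's bounded-differences inequality to $\PVh$ viewed as a function of the $n+m$ i.i.d. random samples. The key technical step is establishing a sharp bounded-differences constant, and everything else is essentially bookkeeping.

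First I would recall the characterization of $\PVh$ through maximum bipartite matching on $\hat{G}$ (Algorithm \ref{Alg_PVhat}). If $|M|$ denotes the size of a maximum matching on $\hat{G}$, then the number of unmatched vertices on each side is $S_w = n - |M|$ and $S_v = m - |M|$, so
\begin{equation*}
\PVh(S_1,S_2,\epsilon) \;=\; \frac{n-|M|}{2n} + \frac{m-|M|}{2m} \;=\; 1 - |M|\brck{\frac{1}{2n}+\frac{1}{2m}}.
\end{equation*}

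Next I would establish the bounded-differences property. Suppose we replace a single sample, say $x_i$ by $x_i'$, obtaining a new graph $\hat{G}'$ with maximum matching of size $|M'|$. Since $\hat{G}$ and $\hat{G}'$ differ only in the edges incident to one vertex, a standard matching argument gives $\bigl||M|-|M'|\bigr|\leq 1$: removing the (at most one) edge of $M$ incident to $x_i$ yields a matching of $\hat{G}'$ of size $\geq |M|-1$, and symmetrically in the other direction. Combined with the display above, this yields
\begin{equation*}
\bigl|\PVh(S_1,S_2,\epsilon) - \PVh(S_1',S_2,\epsilon)\bigr| \;\leq\; \frac{1}{2n}+\frac{1}{2m} \;\leq\; \frac{1}{m},
\end{equation*}
where the last inequality uses $n\geq m$. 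An identical argument handles changes to a single $y_j$ in $S_2$, giving the same bound $c_i \leq 1/m$ for all $n+m$ coordinates.

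Finally I would invoke McDiarmid's inequality. With $n+m \leq 2n$ coordinates each having bounded-differences constant at most $1/m$, we obtain
\begin{equation*}
\mathbb{P}\brck{|\PVh-\mathbb{E}[\PVh]|\geq \eta} \;\leq\; 2\exp\!\brck{-\frac{2\eta^2}{(n+m)/m^2}} \;\leq\; 2\exp\!\brck{-\frac{\eta^2 m^2}{n}},
\end{equation*}
from which the stated bound $e^{-\eta^2 m^2/4n}$ follows after absorbing the leading factor of $2$ into a slightly worse constant in the exponent. The main obstacle, and the only nontrivial step, is the matching-stability claim $\bigl||M|-|M'|\bigr|\leq 1$; everything else (the reformulation of $\PVh$, choosing the uniform constant $c_i \leq 1/m$, and plugging into McDiarmid) is routine.
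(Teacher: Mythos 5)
Your proposal follows essentially the same route the paper intends: the paper's entire argument for Theorem \ref{Thm_PVstability} is the one-line observation that $\PVh$ is stable under replacing a single sample plus McDiarmid's inequality, and you have simply supplied the details (the identity $\PVh = 1 - |M|\brck{\tfrac{1}{2n}+\tfrac{1}{2m}}$, the matching-stability bound $\bigl||M|-|M'|\bigr|\le 1$, and the bounded-differences constant at most $\tfrac{1}{2n}+\tfrac{1}{2m}\le \tfrac1m$), all of which are correct. The one caveat is your final step: $2e^{-\eta^2 m^2/n}\le e^{-\eta^2 m^2/4n}$ holds only when $\eta^2 \ge \tfrac{4n\ln 2}{3m^2}$, so ``absorbing the factor of 2 into the exponent'' is not valid for every $\eta>0$, and for small $\eta$ your derived bound does not literally imply the stated one. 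This is a constant-level blemish rather than a conceptual gap: the theorem as printed already drops the factor 2 that any two-sided McDiarmid/Azuma application produces (with $c_i\le 1/m$ and $n+m\le 2n$ coordinates the martingale form yields exactly $2e^{-\eta^2 m^2/4n}$), so in the regime where either bound is non-vacuous your derivation is in fact tighter than the one the paper implicitly relies on.
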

This theorem shows that the sample estimate of the PV converges to its expectation without dependence on the dimension. By combining this result with Theorem \ref{Thm_PVtailsInftyDist} it may be deduced that only the convergence of the bias -- 
the difference $|\mathbb{E}[\PVh(n,m,\epsilon)]-\PV(P,Q,\epsilon)|$ -- may be exponential in the dimension. This convergence is distribution dependent. However, intuitively, slow convergence is not always the case, for example when the support of the distributions lies in a lower dimensional manifold of the space.
To remedy this dependency we propose a bootstrapping bias correcting technique, presented in Section \ref{Sec_StatAnals}. A different possibility is to project the data to one dimension; due to space limitations, this extension of the PV is left out of the scope of this paper and presented in Appendix \ref{Sec_1Dproj} in the supplementary material.

\section{Statistical Inference \label{Sec_StatAnals}}

We construct two types of complementary procedures for hypothesis testing of similarity and dissimilarity\footnote{The two procedures are distinct, as, in general, lacking evidence to reject similarity is not sufficient to infer dissimilarity, and vice versa.}. In the first type of procedures, given $0\leq\theta<1$, we distinguish between the null hypothesis $\mathcal{H}^{(1)}_0:\PV(P,Q,\epsilon,{ \bf d})\leq \theta $, which implies similarity, and the alternative hypothesis  $\mathcal{H}^{(1)}_1:\PV(P,Q,\epsilon,{\bf d})>\theta$.
Notice that when $\theta=0$, this test is a relaxed version of the TSP.
Using $\PV(P,Q)=0$ instead of $P=Q$ as the null, allows for some distinction between the distributions, which gives the needed relaxation to capture similarity.
In the second type of procedures, we test whether two distributions are similar. To do so, we flip the role of the null and the alternative. Note that there isn't an equivalent of this form for the TSP, therefore we can not infer similarity using the TSP test, but only reject equality.
Our hypothesis tests are based on the finite sample analysis presented in Section \ref{Sec_Analysis}; see Appendix \ref{A1_HypTests} in the supplementary material for the procedures.

To provide further inference on the PV, we apply bootstrapping for approximations of Confidence Intervals (CI).
The idea of bootstrapping for estimating CIs is based on a two step procedure: approximation of the sampling distribution of the statistic by resampling with replacement from the initial sample -- the bootstrap stage -- following, a computation of the CI based on the resulting distribution. We propose to estimate the CI by Bootstrap Bias-Corrected accelerated (BCa) interval, which adjusts the simple percentile method to correct for bias and skewness \cite{efron1993introduction}. The BCa is known for its high accuracy; particularly, it can be shown, that the BCa interval converges to the theoretical CI with rate $O(N^{-1})$, where $N$ is the sample size.
Using the CI, a hypothesis test may be formed: the null $\mathcal{H}_0^{(1)}$ is rejected with significance $\alpha$ if the range $[0,\theta]\not\subset [\underline{CI},\overline{CI}]$. Also, for the second test, we apply the principle of CI inclusion \cite{EquivalenceTesting}, which states that if $[\underline{CI},\overline{CI}]\subset [0,\theta]$, dissimilarity is rejected and similarity deduced.
%which states equivalence if and only if the CI is fully contained within the equivalence interval
\section{Experiments \label{Sec_Exp}}

\subsection{Synthetic Simulations}
In our first experiment, we examine the effect of the choice of $\epsilon$ on the statistical power of the test. For this purpose, we apply significance testing for similarity on two univariate uniform distributions: $P\sim U[0,1]$ and $Q\sim U[\Delta(\epsilon),1+\Delta(\epsilon)]$, where $\Delta(\epsilon)$ is a varying size of perturbation. We considered values of $\epsilon=[0.1,0.2,0.3,0.4,0.5]$ and sample sizes up to $5000$ samples from each distribution. For each value $\epsilon'$, we test the null hypothesis $\mathcal{H}^{(1)}_0:PV(P,Q,\epsilon')=0$ for ten equally spaced values of $\Delta(\epsilon')$ in the range $[0,2\epsilon']$. In this manner, we test the ability of the PV to detect similarity for different sizes of perturbations. The percentage of times the null hypothesis was falsely rejected, i.e. the type-1 error, was kept at a significance level $\alpha=0.05$. The percentage of times the null hypothesis was correctly rejected, the power of the test, was estimated as a function of the sample 
size and averaged over 500 repetitions. We repeated the simulation using the tests based on the bounds as well as using BCa confidence intervals. 

The results in Figure (\ref{fig:syn_exp}) show the type-2 error of the bound based simulations. As expected, the power of the test increases as the sample size grows. Also, when finer perturbations need to be detected, more samples are needed to gain statistical power. For the BCa CI we obtained type-1 and type-2 errors smaller than 0.05 for all the sample sizes. This shows that the convergence of the estimated PV to its value is clearly faster than the bounds. Note that, given a sufficient sample size, any statistic for the TSP would have rejected similarity for any $\Delta>0$.  
 
\begin{figure}[t]
\centering
\subfigure[The Type-2 error for varying perturbation sizes and $\epsilon$ values. \label{fig:syn_exp}]{
\includegraphics[trim =0cm 6cm 0.4cm 7cm ,clip, width=0.31\textwidth] {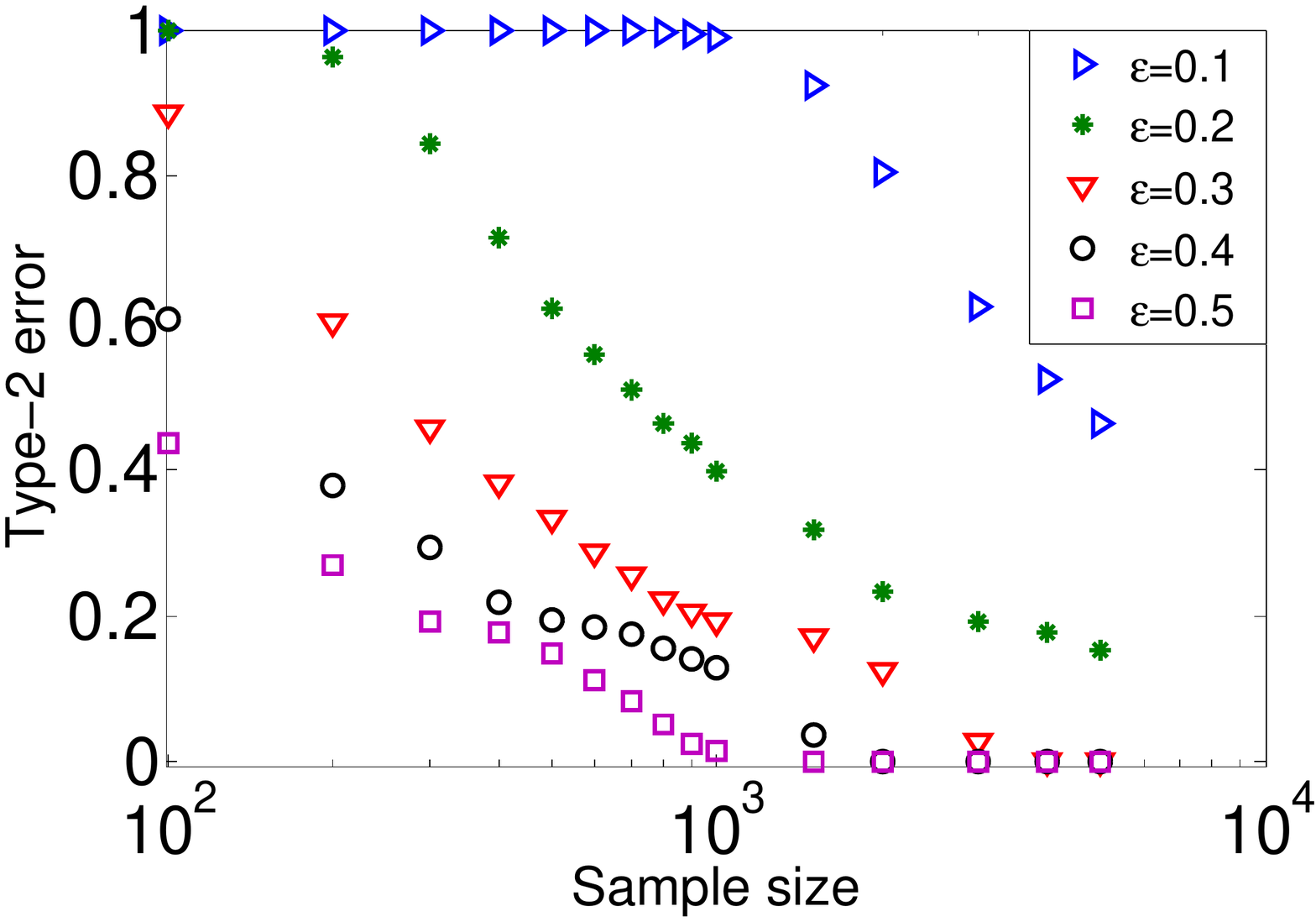}
}
\subfigure[Precision-Recall: Gait data.  \label{fig:Gait_PRC}]{
   \includegraphics[trim = 0cm 6cm 0.7cm 6cm, clip,width=0.31\textwidth] {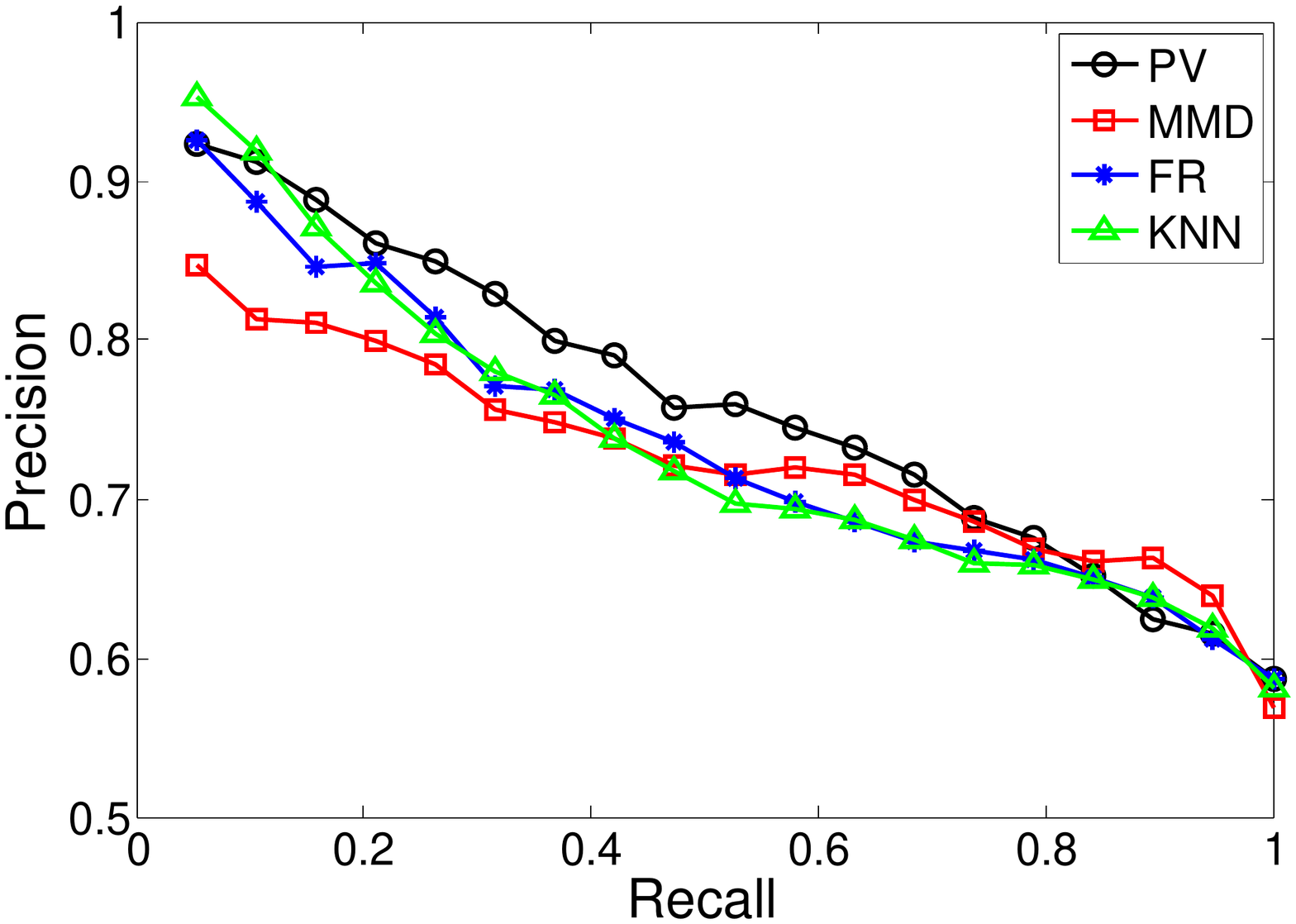}
}
 \subfigure[Precision-Recall: Video clips. \label{fig:Video_PRC}]{
  \includegraphics[trim = 0.8cm 6cm 0.7cm 6cm, clip,width=0.31\textwidth] {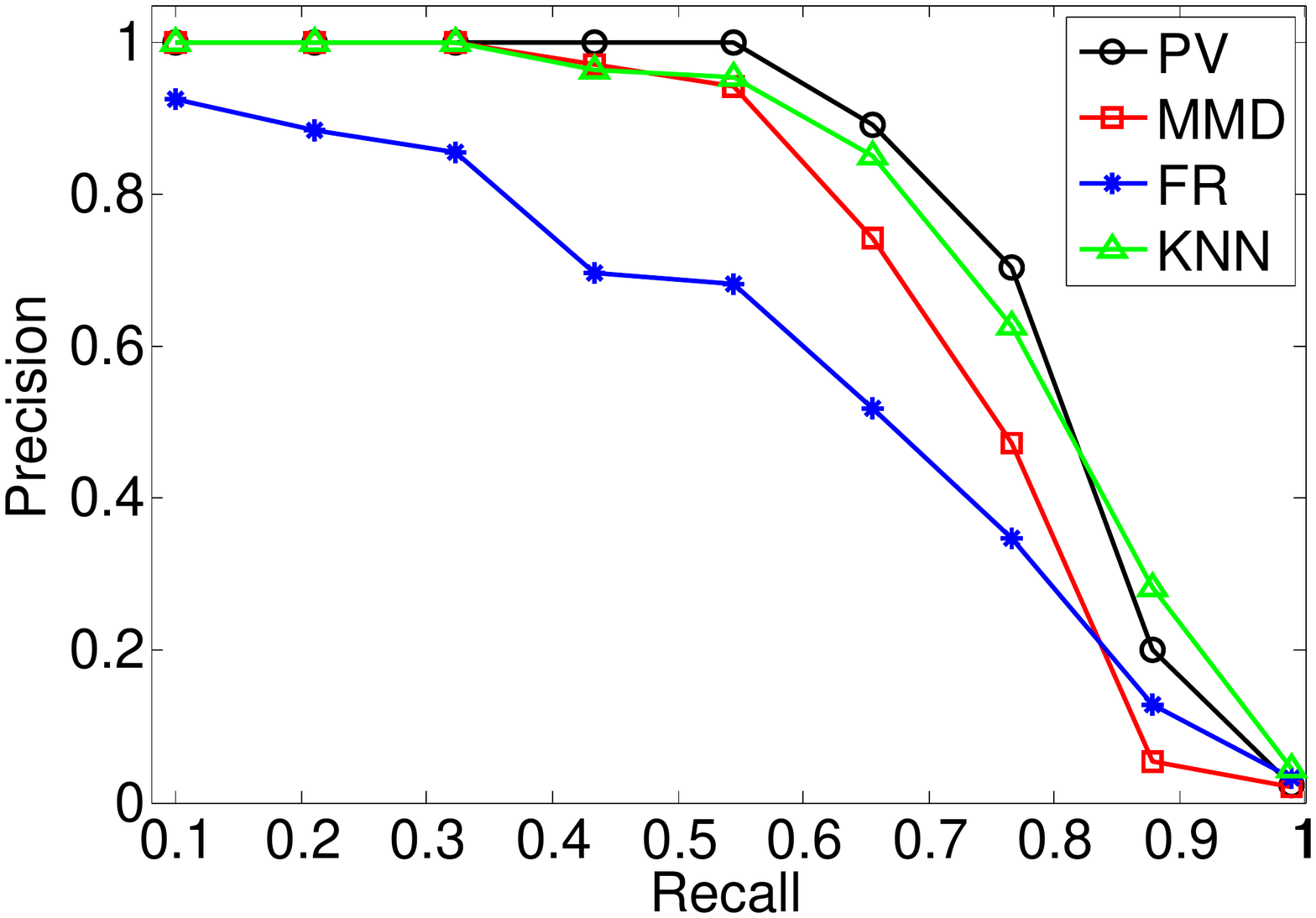}
}
\vspace{-0.3cm}
\end{figure}
%Each recall step represents a retrieved sample that is similar to the query sample.

\subsection{Comparing Distance Measures}
Next, we test the ability of the PV to measure similarity on real data.
To this end, we test the ranking performance of the PV score against other known distributional distances. We compare the PV to the multivariate extension of the Wald-Wolfowitz score of Friedman \& Rafsky ({\sc FR}) \cite{friedman1979multivariate} , Schilling's nearest neighbors score ({\sc KNN}) \cite{schilling1986multivariate}, and the Maximum Mean Discrepancy score of Gretton et al. \cite{gretton2007kernel} ({\sc MMD})\footnote{Note that the statistical tests of these measures test equality while the PV tests similarity and therefore our experiments are not of statistical power but of ranking similarity. Even in the case of the distances that may be transformed for similarity, like the MMD, there is no known function between the PV similarity to other forms of similarity. As a result, there is no basis on which to compare which similarity test has better performance.}.        
We rank similarity for the applications of video retrieval and gait recognition. 

The ranking performance of the methods was measured by precision-recall curves, and the Mean Average Precision (MAP). Let $r$ be the number of samples similar to a query sample. For each  $1\leq i\leq r$ of these observations, define $r_i\in [1,T-1]$ as its similarity rank, where $T$ is the total number of observations. The Average Precision is: $AP=1/r\sum_{i}i/r_i$, and the MAP is the average of the AP over the queries.
The tuning parameter for the methods -- $k$ for the {\sc KNN}, $\sigma$ for the {\sc MMD} (with RBF kernel), and $\epsilon$ for the PV -- were chosen by cross-validation. 
The Euclidian distance was used in all methods.

In our first experiment, we tested raking for video-clip retrieval. The data we used was collected and generated by \cite{shao2011exploring}, and includes 1,083 videos of commercials, each of about 1,500 frames (25 fps). Twenty unique videos were selected as query videos, each of which has one similar clip in the collection, to which 8 more similar clips were generated by different transformations: brightness increased/decreased, saturation increased/decreased, borders cropped, logo inserted, randomly dropped frames, and added noise frames. Lastly, each frame of a video was transformed to a 32-RGB representation.
We computed the similarity rate for each query video to all videos in the set, and ranked the position of each video. The results show that the PV and the KNN score are invariant to most of the transformations, and outperform the FR and MMD methods (Table \ref{MAPres} and Figure \ref{fig:Video_PRC}). We found that brightness changes were most problematic for the PV. For this type of distortion, the simple RGB representation is not sufficient to capture the similarity.

\begin{table}[t]\small
\begin{center}
\caption{ \small{MAP for Auslan, Video, and Gait data sets.
Average MAP ($\pm$ standard deviation)  computed on a random selection of $75\%$ of the queries, repeated 100 times.}}
\label{MAPres}
\begin{sc}
\begin{tabular}{lcccc}
\hline
%\abovespace
Data set                     & $\PVh$             & KNN   & MMD & FR \\
\hline
%\abovespace
%Auslan                      &   {\bf 0.552  }     &  $0.466$   & $0.504$    &  $0.324$\\
%			    & $\pm0.0028$       &$\pm0.0029$ &$\pm0.003$  &$\pm0.0023$\\
%\hline
%$(k=6)$,$(\epsilon = 0.26)$,$(\sigma = )$

Video                       &   {\bf 0.758 }$\pm 0.009$      & \bf{0.741} $\pm 0.014$   & $0.689\pm0.008$     & $0.563\pm0.019$ \\
\hline
%$(\epsilon=0.15)$ %$(k=3)$ $(sigma=0.17)$ 

Gait                        &   {\bf 0.792}$\pm0.021$      &  $0.736\pm0.014$    & $0.722\pm0.017$  &  $0.698\pm0.017$\\
%$(k=4)$ $(\epsilon=0.4)$

Gait-F                      &   { \bf 0.844}$\pm0.017$    &  $0.750\pm0.015$    & $0.729 \pm 0.017$    &  $0.666\pm0.016$\\
%\hline
%\abovespace
Gait-M                      &   $0.679\pm0.024$          &  $0.712\pm0.017$    & $0.716\pm0.031$  &  {\bf 0.799} $\pm0.016$ \\

\hline
\end{tabular}
\end{sc}
\end{center}
\vspace{-2ex}
\end{table}

We also tested gait similarity of female and male subjects; same gender samples are assumed similar. We used gait data that was recorded by a mobile phone, available at \cite{frank2010gaitdata}. The data consists of two sets of 15min walks of 20 individuals, 10 women and 10 men.  As features we used the magnitude of the triaxial accelerometer.We cut the raw data to intervals of approximately 0.5secs, without identification of gait cycles. In this manner, each walk is represented by a collection of about 1500 intervals. An initial scaling to [0,1] was performed once for the whole set. The comparison was done by ranking by gender the 39 samples with respect to a reference walk.

The precision-recall curves in Figure \ref{fig:Gait_PRC} show that the PV is able to retrieve with higher precision in the mid-recall range. For the early recall points the PV did not show optimal performance; Interestingly, we found that with a smaller $\epsilon$, the PV had better performance on early recall points. This behavior reflects the flexibility of the PV: smaller $\epsilon$ should be chosen when the goal is to find very similar instances, and larger when the goal is to find higher level similarity. The MAP results presented in Table \ref{MAPres} show that the PV had better performance on the female subjects. From examination of the subject information sheet we found that the range of weight and hight within the female group is 50-77Kg and 1.6-1.8m, while within the male group it is 47-100Kg and 1.65-1.93m; that is, there is much more variability in the male group. This information provides a reasonable explanation to the PV results, as it appears that a subject from the male group may have a gait 
that is as dissimilar to the gait of a female subject as it is to a different male. In the female group the subjects are more similar and therefore the precision is higher.

% \begin{figure}[t]
% \centering
% \subfigure[Auslan data.\label{fig:Auslan_PRC}]{
% \includegraphics[width=0.3\textwidth] {figs/Auslan_PrecisionRecallCurve}
% }
% \subfigure[Gait data.  \label{fig:Gait_PRC}]{
%    \includegraphics[trim = 0cm 0.5cm 0cm 0.5cm, clip,width=0.3\textwidth] {figs/PrecisionRecall_All}
% }
%  \subfigure[Video clips. \label{fig:Video_PRC}]{
%   \fbox{\includegraphics[trim = 0.6cm 6cm 0cm 6cm, clip,width=0.3\textwidth] {figs/video_precision_recall}}
% }
% \caption{\small{Precision-recall curves of different distance measures for different data sets. Each recall step represents a retrieved sample that is similar to the query sample.}} 
% \end{figure}

\section{Discussion}
We proposed a new score that measures the similarity between two multivariate distributions, and assigns to it a value in the range [0,1]. The sensitivity of the score, reflected by the parameter $\epsilon$, allows for flexibility that is essential for quantifying the notion of similarity. The PV is efficiently estimated from samples. Its low computational complexity relies on its simple binary classification of points as neighbors or non-neighbor points, such that optimization of distances of faraway points is not needed. In this manner, the PV captures only the essential information to describe similarity.
Although it is not a metric, our experiments show that it captures the distance between similar distributions as well as well known distributional distances. 
Our work also includes convergence analysis of the PV. Based on this analysis we provide hypothesis tests that give statistical significance to the resulting score. While our bounds are dependent on the dimension, when the intrinsic dimension of the data is smaller than the domains dimension, statistical power can be gained by bootstrapping. In addition, the PV has an intuitive interpretation that makes it an attractive score for a meaningful statistical testing of similarity.
Lastly, an added value of the PV is that its computation also gives insight to the areas of discrepancy; namely, the areas of the unmatched samples. In future work we plan to further explore this information, which may be valuable on its own merits.

%Our work also includes convergence analysis of the PV, and a projected variant of it. Based on this analysis we provide hypothesis tests that give statistical significance to the resulting score. Future work include further application of the PV on larger data sets, which will allow for significance testing and better understanding of the similarity/dissimilarity of different domains.

\bibliographystyle{unsrt}
\bibliography{PVbibNIPS2012}

\newpage

\appendix

\section{Supplementary Material}

\subsection{Hypothesis Testing Procedures \label{A1_HypTests}}
The statistical tests in this section are based on the convergence bounds in Section \ref{Sec_Analysis}.
\paragraph{Notations} Throughout this section the probabilities $\mathbb{P}_0$ and $\mathbb{P}_1$ represent the probability
conditioned on the null hypothesis $\mathcal{H}_0$, and the alternative hypothesis $\mathcal{H}_1$.

The following procedure tests the hypothesis $\mathcal{H}^{(1)}_0:\PV(P,Q,\epsilon)\leq \theta$ against the alternative $\mathcal{H}^{(1)}_1:\PV(P,Q,\epsilon)>\theta.$

\fbox{
\smallskip
\begin{minipage}{1\textwidth}
\begin{procedure}\label{procedure1}
Similarity Testing Based on $\PVh$.\\
{\bf Input}: $\epsilon$, $\theta$ and significance level $\alpha$.
\begin{enumerate}
\item Sample $S_1 = \{x_1,...,x_n\}\sim P$ and $S_2=\{y_1,...,y_m\}\sim Q$ (define $N=\min(n,m)$).
\item Normalize the data to be in $[0,1]^d$.
\item Compute $\widehat{PV}(S_1,S_2,\epsilon,\Vert \cdot \Vert_\infty)$ by Algorithm \ref{Alg_PVhat}.

\item Compute $t = \sqrt{\frac{(2\log(2(2^{(1/\epsilon)^d}-2))+2\log(1/\alpha)}{N}}$.
\end{enumerate}
{\bf Output}: Reject $\mathcal{H}^{(1)}_0$ if $$\widehat{PV}(S_1,S_2,\epsilon,\Vert \cdot \Vert_\infty) > t+\theta$$.
\end{procedure}
\end{minipage}
}

The probability to reject $\mathcal{H}^{(1)}_0$ by applying Procedure 1 when in fact it holds -- also known as the Type 1 error -- is bounded in the following corollary. %The corollary also provides consistency; that is, in the sample limit, the procedure rejects $\mathcal{H}^{(1)}_0$ when $\mathcal{H}^{(1)}_1$ holds with probability one.
\begin{corollary}\label{lem_type1Proc1}
Assume that for a given $\epsilon$ and $\theta$ values $\mathcal{H}^{(1)}_0:\,PV(P,Q,\epsilon,{ \bf d})\leq \theta$ holds.
Then for the threshold $t$ of Procedure \ref{procedure1} and any $\alpha\in(0,1)$
we have that
\begin{align}\label{Eq_type1Proc1}
\mathbb{P}_0 \left(\widehat{PV}(S_1,S_2,\epsilon,\Vert\cdot\Vert_\infty) \geq  t +\theta \right) \leq \alpha.
\end{align}
Moreover, the procedure is consistent: when $n,m \rightarrow \infty$ we have that $t\rightarrow 0$ and $\mathbb{P}_1(\widehat{PV}(S_1,S_2,\epsilon,\Vert\cdot\Vert_\infty) > \theta )=1$.
\end{corollary}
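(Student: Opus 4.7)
The plan is to deduce both halves of the corollary directly from Theorem \ref{Thm_PVtailsInftyDist}, which is already tailored to the $\Vert\cdot\Vert_\infty$ ground distance used in Procedure \ref{procedure1}. Since Step 2 of the procedure normalizes the data to $[0,1]^d$, a disjoint $\epsilon$-cover in the $\Vert\cdot\Vert_\infty$ metric has cardinality $\mathcal{N}(\epsilon)=(1/\epsilon)^d$. Plugging this into the theorem's expression for $\eta$ with $\delta=\alpha$ reproduces precisely the threshold $t$ defined in Procedure \ref{procedure1}. Hence one obtains
\[
\mathbb{P}\!\left(\,\bigl|\PVh(S_1,S_2,\epsilon,\Vert\cdot\Vert_\infty)-\PV(P,Q,\epsilon,{\bf d})\bigr|\le t\right)\ge 1-\alpha.
\]

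\textbf{Type-1 bound.} Under $\mathcal{H}^{(1)}_0$ we have $\PV(P,Q,\epsilon,{\bf d})\le\theta$. On the event $\{\PVh\ge t+\theta\}$ we therefore get
\[
\PVh-\PV \ge t+\theta-\PV \ge t,
\]
so this event is contained in $\{|\PVh-\PV|\ge t\}$, which has probability at most $\alpha$ by the display above. This gives \eqref{Eq_type1Proc1}. (The inequality is one-sided, so we could even save a factor of two, but the bound as stated is immediate.)

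\textbf{Consistency.} As $n,m\to\infty$, one has $N=\min(n,m)\to\infty$; since $\epsilon$ and $d$ are held fixed, the numerator inside the square root defining $t$ is a constant, so $t\to 0$. For any $\alpha\in(0,1)$ the bound above then gives $|\PVh-\PV|\le t$ with probability at least $1-\alpha$, hence $\PVh\to\PV$ in probability. Under $\mathcal{H}^{(1)}_1$ we have $\PV(P,Q,\epsilon,{\bf d})>\theta$, so eventually $t+\theta<\PV$ and
\[
\mathbb{P}_1(\PVh>\theta)\ge \mathbb{P}_1(|\PVh-\PV|\le t)\xrightarrow[n,m\to\infty]{} 1,
\]
which is the claimed consistency.

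There is no real obstacle here; the proof is just a direct specialization of Theorem \ref{Thm_PVtailsInftyDist} followed by rearrangement of the two one-sided events. The only thing to be careful about is bookkeeping: making sure the constants inside $t$ match the $\eta$ of Theorem \ref{Thm_PVtailsInftyDist} after the substitutions $\delta\leftarrow\alpha$ and $\mathcal{N}(\epsilon)\leftarrow(1/\epsilon)^d$, and that the use of the theorem is justified (i.e., the samples live in $[0,1]^d$ after normalization, so the covering number is finite and as claimed).
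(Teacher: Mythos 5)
Your proof is correct and follows exactly the route the paper intends: the paper itself only states that the corollary ``is a direct result of Theorem \ref{Thm_PVtailsInftyDist},'' and your argument is precisely that specialization ($\delta=\alpha$, $\mathcal{N}(\epsilon)=(1/\epsilon)^d$ after normalization to $[0,1]^d$), followed by the one-sided event inclusion for the Type-1 bound and the $t\to 0$ plus $\PV>\theta$ argument for consistency.
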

The corollary is a direct result of Theorem \ref{Thm_PVtailsInftyDist}.
%Combine the theorem with the null hypothesis $\mathcal{H}_0:\,PV(P,Q,\epsilon,{ \bf d})=0$ to get the bound on the type 1 error described in (\ref{Eq_type1Proc1}). Consistency is obtained by taking $n,m \rightarrow \infty$ in the finite convergence bounds of Theorem \ref{Thm_PVtailsInftyDist}, and combining it with the alternative hypothesis $\mathcal{H}_1:\,PV(P,Q,\epsilon,{\bf d})>0$.

Next, we consider the probability that Procedure \ref{procedure1} fails to reject $\mathcal{H}^{(1)}_0$  when the alternative hypothesis $\mathcal{H}^{(1)}_1$ holds, also known as the Type 2 error. Unfortunately, it is not possible to bound this probability for a finite sample of \emph{any} two distributions. To see this, consider the following example: let $P,Q$ be two distributions with $PV(P,Q,\epsilon)>0$, but differ only in an area of very low probability. Then, for any finite sample size, there is a high probability that the samples are identical, resulting in $\widehat{PV}(S_1,S_2,\epsilon)=0$. As a result, the null hypothesis will not be rejected even though $\mathcal{H}^{(1)}_1$ holds.

However, if the PV is larger than some constant the Type 2 error is bounded.
\begin{corollary}\label{lem_type2Proc1}
For $PV(P,Q,\epsilon,{\bf d})>\theta+t+b$, with $t$ of Procedure \ref{procedure1}, and $b = \sqrt{\frac{2(\log(2(2^{(1/\epsilon)^d}-2))+2\log(1/\beta)}{N}}$ we have that
$$ \mathbb{P} \left( \widehat{PV}(S_1,S_2,\epsilon,\Vert\cdot\Vert_\infty) > t+\theta \right)\geq 1-\beta.$$
\end{corollary}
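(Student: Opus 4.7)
The plan is to derive this directly from Theorem \ref{Thm_PVtailsInftyDist}, which is essentially the only ingredient needed; the corollary is really just Theorem \ref{Thm_PVtailsInftyDist} applied with the deviation parameter set to $\beta$, then rearranged into a one-sided statement.

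First I would instantiate Theorem \ref{Thm_PVtailsInftyDist} with $\delta = \beta$ and with the ground metric $\|\cdot\|_\infty$. After normalization to $[0,1]^d$ (as in Procedure \ref{procedure1}), the disjoint cover cardinality is $\mathcal{N}(\epsilon)=(1/\epsilon)^d$, so the resulting deviation quantity matches $b$ (up to the constant in front of $\log(1/\beta)$ in the statement, which appears to be a minor typographical slip). This yields the two-sided bound
\begin{equation*}
\mathbb{P}\bigl(\,|\PVh(S_1,S_2,\epsilon,\|\cdot\|_\infty) - \PV(P,Q,\epsilon,\|\cdot\|_\infty)| \leq b\,\bigr) \geq 1-\beta.
\end{equation*}

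Next I would keep only the relevant one-sided half: on the event above, $\PVh \geq \PV - b$. Plugging in the assumption $\PV(P,Q,\epsilon,\mathbf{d}) > \theta + t + b$ gives, on the same event,
\begin{equation*}
\PVh(S_1,S_2,\epsilon,\|\cdot\|_\infty) \geq \PV(P,Q,\epsilon,\|\cdot\|_\infty) - b > (\theta + t + b) - b = \theta + t.
\end{equation*}
Since this chain of inequalities holds on an event of probability at least $1-\beta$, the claim $\mathbb{P}(\PVh > t + \theta) \geq 1 - \beta$ follows.

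There is no real obstacle here: the only thing to be careful about is the direction of the deviation (we need a lower tail on $\PVh$, whereas Procedure \ref{procedure1}'s Type 1 analysis in Corollary \ref{lem_type1Proc1} uses the upper tail), and the bookkeeping of the constants so that the $b$ appearing in the assumption cancels against the slack in the concentration bound. Everything else is immediate from Theorem \ref{Thm_PVtailsInftyDist}.
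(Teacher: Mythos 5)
Your proposal is correct and follows essentially the same route as the paper: apply Theorem \ref{Thm_PVtailsInftyDist} with $\delta=\beta$, keep the lower-tail half $\PVh \geq \PV - b$, and use the assumption $\PV > \theta + t + b$ to conclude $\PVh > \theta + t$ on an event of probability at least $1-\beta$. Your remark about the constant in $b$ is harmless in any case, since a larger $b$ only makes the concentration event more probable, so the stated bound still holds.
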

Note that as $N$ grows, the values of $b$ and $t$ get smaller, and the lower bound $PV(P,Q,\epsilon,{\bf d})>\theta+t+b$ decreases.
\begin{proof}
We have that
\begin{align*}
&\mathbb{P}_1\left( \widehat{PV}(S_1,S_2,\epsilon,\Vert\cdot\Vert_\infty) > t+\theta \right)=\mathbb{P}_1\left( \widehat{PV}(S_1,S_2,\epsilon,\Vert\cdot\Vert_\infty) > b+t+\theta-b \right) \geq \\
&\mathbb{P}\left( \widehat{PV}(S_1,S_2,\epsilon,\Vert\cdot\Vert_\infty) > PV(P,Q,\epsilon,\Vert\cdot\Vert_\infty) -b \right) \geq 1-\beta.
\end{align*}
The first inequality holds by inserting the assumption on $PV$, and the second holds by applying the convergence bound of Theorem \ref{Thm_PVtailsInftyDist}.
\end{proof}

To give an estimate of the sample size needed for the procedure, first define the effect size $\theta_0$: the minimal value of PV that is significant.
Given $\theta_0$, set the sample size so that
$$N \geq  \frac{4\log(2(2^{(1/\epsilon)^d}-2))+2\log(1/\alpha)+2\log(1/\beta)}{\theta_0^2}.$$
Using this size ensures a false positive rate bounded by $\alpha$ (Corollary \ref{lem_type1Proc1}), and a false negative rate bounded by $\beta$ (Corollary \ref{lem_type2Proc1}).
%For example, for $\epsilon = 0.05,d=1,\alpha=0.05,\beta=0.05$  and $\theta_0=0.1$ the sample size from each distribution must be at least $~7000$.
%For a sample of this size, we have by Lemma \ref{lem_type2Proc1} that Procedure 1 will reject any distributions with $PV(P,Q,\epsilon,\Vert\cdot\Vert_\infty)\geq \theta_0$ with probability $1-\beta$. In other words, the false negative rate for these distributions will be bounded by $\beta$.

The second test we consider is an equivalence type test \cite{EquivalenceTesting}. Equivalence is achieved when $\PV(P,Q,\epsilon)< \theta$, for some chosen $\theta$, and may be obtained by switching the roles of the null and the alternative of Procedure \ref{procedure1}. Namely, to claim similarity we need to reject $\mathcal{H}^{(2)}_0: \PV(P,Q,\epsilon)\geq \theta$. To test this hypothesis, a similar procedure to Procedure 1 may be applied, with a principal difference in the rejection area, which is changed to $\widehat{PV}(S_1,S_2,\epsilon,\Vert \cdot \Vert_\infty) < \theta -t$.

\subsection{1D Projections \label{Sec_1Dproj}}
We present a method to gain insight on the value of the PV by multiple random projections to one dimension.
%By these projections, we manage to overcome the exponential dependency on the dimension.
The PV between two distributions is not retained after projection to a single dimension, as the projection contracts the distance between the points. However, we show that multiple projections can still aid to distinguish between two situations: $\PV(P,Q,\epsilon)=0$ and $\PV(P,Q,\epsilon)\neq0$ \footnote{Recall that PV=0 not only when the distributions are equal, but also when they are $\epsilon$ similar.}.
First, we define a score that is based on the value of the PV after projections.
\begin{definition} \label{def_PPV}
Let $f_i:\mathbb{R}^d\rightarrow \mathbb{R}$ for $i=1,...,K$, define random projection mappings.
Let $X$ and $Y$ be random variables with distributions $P$ and $Q$. The maximum projected score of two distributions $P$ and $Q$ is
 $$PPV_K(P,Q,\epsilon)=\max_{i=1,...,K}\PV(f_i(X),f_i(Y),\epsilon).$$
For two samples $S_1\sim P$ and $S_2\sim Q$ the score is
 $$\PPV_K(S_1,S_2,\epsilon)=\max_{i=1,...,K}\PVh(f_i(S_1),f_i(S_2),\epsilon).$$
\end{definition}
The next theorem presents the convergence of $\PPV_K$ to zero for distributions with $PV(P,Q)=0$.
\begin{theorem} \label{Thm_PPV1type1}
    Let $P$ and $Q$ be two distributions on the space $([0,1]^d,\bf{d})$, and $S_1=\{x_1,...,x_n\}\sim P$ and $S_2=\{y_1,...,y_m\}\sim Q$ two i.i.d. samples ($N=\min(n,m)$). Perform $K$ i.i.d. random projections of samples $S_1$ and $S_2$ to one dimension.  If  $PV(P,Q,\epsilon,{\bf d})=0$, then for any $\delta\in(0,1)$, with probability at least $1-\delta$
    $$ \PPV_K(S_1,S_2,\epsilon)  \leq \sqrt{\frac{2\log(2K(2^{1/\epsilon}-2)/\delta)}{N}}.$$
\end{theorem}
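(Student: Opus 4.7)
The plan is to reduce the statement to a union bound application of Theorem \ref{Thm_PVtailsInftyDist} in one dimension, after first showing that each random projection preserves the property $PV=0$. Since the bound on the right-hand side is precisely the $d=1$ version of the Theorem \ref{Thm_PVtailsInftyDist} rate (where $\mathcal{N}(\epsilon) = 1/\epsilon$) inflated by a factor of $K$ inside the log, the structure of the intended argument is transparent.

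First I would argue that projections do not destroy similarity: if $f_i:\mathbb{R}^d\to\mathbb{R}$ is a (norm-contracting) projection, then $PV(P,Q,\epsilon,\bd)=0$ implies $PV(f_i(P),f_i(Q),\epsilon)=0$. Let $\mu_\gamma \in M(P,Q)$ be a coupling with $\mathbb{P}_{\mu_\gamma}[\bd(X,Y)>\epsilon]<\gamma$; the pushforward $\mu_\gamma \circ (f_i,f_i)^{-1}$ is a coupling of $f_i(P),f_i(Q)$, and the contraction $|f_i(X)-f_i(Y)|\leq \bd(X,Y)$ gives $\{|f_i(X)-f_i(Y)|>\epsilon\}\subseteq\{\bd(X,Y)>\epsilon\}$. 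Hence the infimum over projected couplings is at most $\gamma$ for every $\gamma>0$, so $PV(f_i(P),f_i(Q),\epsilon)=0$.

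Second, conditioning on a fixed projection $f_i$, the projected samples $f_i(S_1),f_i(S_2)$ are i.i.d. samples from $f_i(P),f_i(Q)$, supported in a one-dimensional interval (which we take to be $[0,1]$ after the usual normalization). Applying Theorem \ref{Thm_PVtailsInftyDist} with $d=1$ and $\mathcal{N}(\epsilon)=1/\epsilon$, and using $PV(f_i(P),f_i(Q),\epsilon)=0$ from the first step, yields
\begin{align*}
\mathbb{P}\!\left(\PVh(f_i(S_1),f_i(S_2),\epsilon) > \sqrt{\tfrac{2(\log(2(2^{1/\epsilon}-2)) + \log(1/\delta'))}{N}}\right) \leq \delta'.
\end{align*}
Setting $\delta' = \delta/K$ and taking a union bound over $i=1,\ldots,K$ controls the maximum and gives the bound claimed, since $\log(2(2^{1/\epsilon}-2)) + \log(K/\delta) = \log(2K(2^{1/\epsilon}-2)/\delta)$.

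The main obstacle, and the step that deserves the most care, is the first one: making precise what class of random projections is assumed (presumably $f_i(x)=\langle v_i, x\rangle$ with $v_i$ chosen so the image lies in $[0,1]$, or equivalently with a renormalization absorbed into $\epsilon$) and verifying that contraction holds in whatever norm $\bd$ refers to, so that the event $\{\bd(X,Y)\leq\epsilon\}$ is preserved under projection. Everything else is routine: the 1D instance of Theorem \ref{Thm_PVtailsInftyDist} is applied conditionally on each projection, and a simple union bound over the $K$ projections produces the $\log K$ penalty inside the square root.
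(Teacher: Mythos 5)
Your proposal is correct and follows essentially the same route as the paper: it uses the non-expansion property of 1D projections to show the projected PV remains zero, applies Theorem \ref{Thm_PVtailsInftyDist} with $d=1$ (so $\mathcal{N}(\epsilon)=1/\epsilon$) to each projection, and finishes with a union bound over the $K$ projections to absorb the factor $K$ into the logarithm. Your explicit pushforward-coupling argument merely fleshes out the paper's one-line appeal to non-expansion, so no substantive difference remains.
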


\begin{proof}
Given $PV(P,Q,\epsilon,{\bf d})=0$, we have that for all $K$ projections  $PPV_{1i}(P,Q,\epsilon,|\cdot|)=PV(P,Q,\epsilon,{\bf d})=0$, as the projection to 1D is a non-expansion.

In the following we denote by $\mathbb{P}_0(A)$ the probability of event $A$ under the assumption $PV(P,Q,\epsilon,{\bf d})=0$.
%Also, we drop the sample and distance specification for clarity: $\widehat{PPV}_{K}(\epsilon) =  \widehat{PPV}_{K}(S_1,S_2,\epsilon,|\cdot|)$, and
Denote $\PPV_{i}(\epsilon)= \PVh(f_i(S_1),f_i(S_2),\epsilon)$ as the value of PV obtained due to the $i$th projection.

We bound the probability of the event $\PPV_K(S_1,S_2,\epsilon)\geq \eta$:
\begin{align}\label{eq:PPV_eq1}
           &\mathbb{P}_0 \left( \max_{1 \leq i \leq K}  \widehat{PPV}_{i}(\epsilon) \geq \eta  \right) = 
           \mathbb{P}_0 \left(\exists 1 \leq i \leq K : \widehat{PPV}_{i}(\epsilon) \geq \eta \right)
           \leq \sum_{i=1}^{K} \mathbb{P}_0 \left(  \widehat{PPV}_{i}(\epsilon) \geq \eta \right), 
\end{align}
where the last inequality is obtained by applying the union bound.
%Noting that under the assumption $PPV_{1i}(P,Q,\epsilon,|\cdot|)=0$ results in
%\begin{align*}
%\mathbb{P}_0 \left( \max_{1 \leq i \leq K}  \widehat{PPV}_{1i}(\epsilon) \geq \eta  \right)  \leq \sum_{i=1}^{K} \mathbb{P}_0 \left(  \widehat{PPV}_{1i}(\epsilon) \geq PPV_{1i}(\epsilon)+\eta \right).
%\end{align*}

Combining (\ref{eq:PPV_eq1}) with Theorem \ref{Thm_PVtailsInftyDist}, we have that for any $\eta\in(0,1)$
\begin{align*}
& \mathbb{P}_0 \left( \max_{1 \leq i \leq K}  \widehat{PPV}_{i}(\epsilon) \geq \eta  \right) 
\leq K\max_{1 \leq i \leq K} \mathbb{P}_0 \left(  \widehat{PPV}_{i}(\epsilon) \geq \eta \right)
\leq  2K(2^{1/\epsilon}-2)e^{-N\eta^2/2}.
\end{align*}
Setting $\delta =2K(2^{1/\epsilon}-2)e^{-N\eta^2/2}$ concludes the proof.
\end{proof}

For $\PV(P,Q)>0$ , we provide a similar lower bound on the maximum score. We will need a further assumption for this bound.
\begin{assumption}\label{Asmp_supPPV}
Given distributions $P$ and $Q$ with $\PV(P,Q,\epsilon)>0$, they are \textbf{1D distinguishable} if $\lim_{K\rightarrow\infty} PPV_{K}(P,Q,\epsilon)>0$ almost surely.
\end{assumption}
This assumption ensures that the difference in the PV value exists in at least one projection.
 \begin{theorem}\label{Thm_PPV1type2}
Let $P$ and $Q$ be two distributions on the space $([0,1]^d,\bf{d})$. Given $i=1,...,K$ i.i.d. samples $S_{i1}=\{x_{i1},...,x_{in}\}\sim P$ and $S_{i2}=\{y_{i1},...,y_{im}\}\sim Q$, and $K$ mappings $f_i$, for any two distribution that fulfill Assumption \ref{Asmp_supPPV} there exists
 some $q\in(0,1)$, for which for any $\delta\in(0,1)$ with probability at least $1-\left(  q-q\delta+\delta \right)^K$
$$\PPV(\{S_{i1}\},\{S_{i2}\},\epsilon)  \geq \sqrt{\frac{2\log(2K(2^{1/\epsilon}-2)/\delta)}{N}}.$$
\end{theorem}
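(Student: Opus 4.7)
The plan is to upper bound the complementary event $\{\PPV_K < \eta\}$ for $\eta = \sqrt{2\log(2K(2^{1/\epsilon}-2)/\delta)/N}$ and exploit the product structure coming from independence. Since $\PPV_K = \max_i \PVh_i$, the event $\{\PPV_K < \eta\}$ equals $\bigcap_{i=1}^K \{\PVh_i < \eta\}$. Because the projections $f_i$ and sample pairs $(S_{i1},S_{i2})$ are mutually independent across $i$,
\begin{align*}
\mathbb{P}\brck{\PPV_K < \eta} = \prod_{i=1}^{K} \mathbb{P}\brck{\PVh_i < \eta},
\end{align*}
so it suffices to show that each factor is at most $q + (1-q)\delta = q - q\delta + \delta$.

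For a single projection, the strategy is to split on whether the random projection itself is ``informative.'' Assumption \ref{Asmp_supPPV} together with monotonicity of $PPV_K$ in $K$ forces the essential supremum of $\PV(f(X),f(Y),\epsilon)$ under the projection law to be strictly positive; pick any $c$ strictly below this essential supremum and set $1-q := \mathbb{P}_f[\PV(f(X),f(Y),\epsilon) \geq c] > 0$, so $q \in (0,1)$ is an intrinsic quantity depending only on $P,Q,\epsilon$ and the projection distribution. Writing $A_i = \{\PV(f_i(X),f_i(Y),\epsilon) \geq c\}$,
\begin{align*}
\mathbb{P}\brck{\PVh_i < \eta} \leq \mathbb{P}(A_i^c) + \mathbb{P}(A_i)\,\mathbb{P}\brck{\PVh_i < \eta \mid A_i} = q + (1-q)\,\mathbb{P}\brck{\PVh_i < \eta \mid A_i}.
\end{align*}

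To control the conditional probability I would apply Theorem \ref{Thm_PVtailsInftyDist} to the one-dimensional samples $f_i(S_{i1}), f_i(S_{i2})$, where the cover cardinality of $[0,1]$ is $\mathcal{N}(\epsilon)=1/\epsilon$. With probability at least $1-\delta$ over the samples,
\begin{align*}
\PVh_i \geq \PV\brck{f_i(X),f_i(Y),\epsilon} - \sqrt{\tfrac{2\log(2(2^{1/\epsilon}-2)/\delta)}{N}},
\end{align*}
and on $A_i$ the right-hand side is at least $c - \sqrt{2\log(2(2^{1/\epsilon}-2)/\delta)/N}$. For $N$ large enough that $c$ dominates the sum $\eta + \sqrt{2\log(2(2^{1/\epsilon}-2)/\delta)/N}$, this forces $\PVh_i \geq \eta$, giving $\mathbb{P}(\PVh_i < \eta \mid A_i) \leq \delta$. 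Plugging back yields $\mathbb{P}(\PVh_i < \eta) \leq q - q\delta + \delta$, and taking the product over the $K$ independent indices completes the proof.

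The main obstacle I anticipate is the implicit sample-size requirement needed to make the fixed threshold $c$ dominate $\eta$ plus the 1D concentration error: this is what reconciles a target level $\eta$ that carries $\log(2K\cdot)$ inside the square root with a per-projection concentration penalty that carries only $\log(2\cdot)$. A smaller subtlety is carefully extracting the constant $c$ (and hence $q$) from Assumption \ref{Asmp_supPPV} in a way that depends only on $P,Q,\epsilon$ and the projection law, so that the quantifier ``there exists $q\in(0,1)$'' in the statement is not entangled with $\delta$, $N$, or $K$.
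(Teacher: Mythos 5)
Your overall architecture matches the paper's: reduce to a single projection via the independence product $[\mathbb{P}(\PVh_1\leq\eta)]^K$, split on an event that the random projection is ``informative,'' bound the conditional probability with Theorem \ref{Thm_PVtailsInftyDist} applied in one dimension (cover cardinality $1/\epsilon$), and assemble $q-q\delta+\delta$. The one substantive divergence is the choice of the informative event. You condition on $A_i=\{\PV(f_i(X),f_i(Y),\epsilon)\geq c\}$ for a \emph{fixed} $c$ below the essential supremum, which indeed gives a $q$ depending only on $P,Q,\epsilon$ and the projection law; but then forcing $\PVh_i\geq\eta$ requires $c\geq \eta+\sqrt{2\log(2(2^{1/\epsilon}-2)/\delta)/N}$, i.e.\ an explicit lower bound on $N$ in terms of $c,\delta,K,\epsilon$ that is nowhere in the theorem statement, and that fails for any fixed $N$ once $\delta$ is small. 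You name this obstacle but do not resolve it, so as written your argument proves a weaker, sample-size-qualified version of the claim. The paper's way around it is to take the threshold to be $2\tilde\eta$ with $\tilde\eta$ the target itself: on the event $\{\PV_1\geq 2\tilde\eta\}$ the deviation needed from Theorem \ref{Thm_PVtailsInftyDist} is exactly $\tilde\eta$, so the conditional probability is at most $2(2^{1/\epsilon}-2)e^{-N\tilde\eta^2/2}\leq\delta$ with no extra condition on $N$ in that step; the whole burden is then shifted to asserting, via Assumption \ref{Asmp_supPPV}, the existence of $q\in(0,1)$ with $\mathbb{P}(\PV_1<2\tilde\eta)\leq q$.

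It is worth noting that the trade-off is not one-sided: the paper's $q$ is defined relative to $2\tilde\eta$, which depends on $N$, $K$ and $\delta$, so its claim that a single $q$ works ``for any $\delta$'' (and its validity when $2\tilde\eta$ exceeds the essential supremum, e.g.\ for small $N$) is asserted rather than proven --- essentially the mirror image of the looseness you surfaced explicitly. So your route is not wrong in spirit, and your handling of $q$ is cleaner, but to recover the theorem as stated you would need the paper's $\eta$-dependent thresholding (or to add the sample-size hypothesis to the statement); simply choosing a fixed $c$ cannot deliver the bound uniformly over $\delta$ and $N$.
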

The notation $\PPV(\{S_{i1}\},\{S_{i2}\},\epsilon)$ denotes the maximum taken over the projections of the $K$ sets.
Notice that $q-q\delta+\delta<1$, and therefore is an exponential decay in the number of projections $K$.

\begin{proof}

Let $f:\mathbb{R}^d\rightarrow \mathbb{R}$ define a random projection mappings.
Let $X$ and $Y$ be random variables generated by $P$ and $Q$.
Denote $\PV_{1}= \PV(f(X),f(Y),\epsilon),$ and
$\PVh_{1}(\epsilon)= \PVh(f(S_1),f(S_2),\epsilon)$.
Note that there are two sources of randomization, the sample's and the projection's, and therefore $\PV_{1}$ is also a random variable.

We have that
\begin{align} \label{eq1_PPV1}
&\mathbb{P}(\PPV(\{S_{i1}\},\{S_{i2}\},\epsilon) \leq \eta ) = 
\mathbb{P}( \max_{1\leq i \leq K} \PVh(f_i(S_{i1}),f_i(S_{i2}),\epsilon) \leq \eta) =\\
&\mathbb{P}( \forall \, 1\leq i \leq K \,, \PVh(f_i(S_{i1}),f_i(S_{i2}),\epsilon)\leq \eta) \nonumber=
 [ \mathbb{P}(\PVh_{1}(\epsilon)\leq \eta )]^K ,\nonumber
\end{align}
where the last equality holds due to the independence of the events.
Next, we bound the probability $\mathbb{P}( \PVh_{1}(\epsilon)\leq \eta )$.
We define complementary events $A: \PV_{1}(\epsilon) \geq 2\eta$ and $A^c :\PV_{1}(\epsilon)< 2\eta$.
\begin{align}\label{eq2_PPV1}
&  \mathbb{P}( \PVh_{1}(\epsilon) \leq \eta )   = 
P(A) P( \PVh_{1}(\epsilon) \leq \eta \,|\, A )+ P(A^c) P(\PVh_{1}(\epsilon) \leq \eta \,|\,A^c ) \\
& \leq P(A) P\left( \PVh_{1}(\epsilon) \leq \PV_{1} - \eta \,|\, A   \right)  
+ P(A^c)  P\left( \PVh_{1}(\epsilon) \leq \eta \,|\,A^c \right)  \nonumber\\
&\leq P(A) 2(2^{1/\epsilon}-2)e^{-N\eta^2/2} +
 P(A^c)  
\leq P(A) 2K(2^{1/\epsilon}-2)e^{-N\eta^2/2} +1-P(A).\nonumber
\end{align}

The inequality before last is obtained by applying Theorem \ref{Thm_PVtailsInftyDist} for any $\eta\in(0,1)$.
For $\delta = 2K(2^{1/\epsilon}-2)e^{-N\tilde{\eta}^2/2}$ we have  $\tilde{\eta}= \sqrt{\frac{2\log(2K(2^{1/\epsilon}-2)\delta)}{N} }.$
Substituting this $\tilde{\eta}$ to (\ref{eq2_PPV1}) results in
\begin{align}\label{eq3_PPV1}
\mathbb{P}( \PV_{1}(\epsilon) \leq \tilde{\eta} )\leq 1 - (1-\delta)P(A).
\end{align}

Let $p(\eta,\epsilon) = \mathbb{P}(\PV_1(P,Q,\epsilon) \leq \eta) $ be the distribution of the projected PV. Clearly, $p(\eta,\epsilon)$ depends on the generating distributions $P$ and $Q$, and its support is
$[0,\sup_i (PPV_{i}(P,Q,\epsilon))]$.
We assume that $\sup_i (PPV_{i}(P,Q,\epsilon))>0$, and therefore there must be some $q\in(0,1)$ for which
\begin{align}\label{eq4_PPV1}
  \mathbb{P}(\PV_1(P,Q,\epsilon) < 2\tilde{\eta}) \leq q.
\end{align}

Combining the results of (\ref{eq1_PPV1}), (\ref{eq3_PPV1}) and (\ref{eq4_PPV1}), we have that for any $0<\delta<1$
\begin{align*}
&\mathbb{P}(\PPV(\{S_{i1}\},\{S_{i2}\},\epsilon) \leq \tilde{\eta} ) 
\leq \left(1 - (1-\delta)\mathbb{P}(\PV_1(\epsilon) \geq 2\tilde{\eta}) \right)^K 
\leq ( q-q\delta+\delta)^K.
\end{align*}

Therefore, with probability at least $1-( q-q\delta+\delta)^K$
\begin{align*}
&\PPV(\{S_{i1}\},\{S_{i2}\},\epsilon) \geq \sqrt{\frac{2\log(2K(2^{1/\epsilon}-2)/\delta)}{N}} .
\end{align*}
Note that for any $q<1$ results in $q-q\alpha+\alpha<1$, and therefore we get exponential decay.
For example for $q=1/2$ ($2\tilde{\eta}$  is smaller then the median) we have $\left(q-q\delta+\delta\right)^K = \left(\frac{1+\delta}{2} \right)^K$.
\end{proof}

Theorems \ref{Thm_PPV1type1} and \ref{Thm_PPV1type2} are complementary, and may be used together to infer whether or not $\PV(P,Q)=0$. Next, we describe the suitable hypothesis testing procedure for this goal.
Procedure \ref{procedure2} provides statistical tests based on the score $\PPV_K$ (Definition \ref{def_PPV}).
The procedure tests an hypothesis of the first type with $\theta=0$: $\mathcal{H}^{(1)}_0:PV(P,Q,\epsilon)=0$ against $\mathcal{H}^{(1)}_1:PV(P,Q,\epsilon)>0.$

\fbox{
\smallskip
\begin{minipage}{1\textwidth}
\begin{procedure}\label{procedure2}
 Similarity testing based on $\PPV_K$.\\
{\bf Input}: $\epsilon$ level, number of projections $K$, and significance level $\alpha$.\\
{\bf For} $i=1,...,K$ {\bf do}
\begin{enumerate}
\item    Sample $S_{i1} = \{x_1,...,x_n\}\sim P$ and $S_{i2}=\{y_1,...,y_m\}\sim Q$ i.i.d. examples on $[0,1]^d$.
\item    Sample a unit random vector $r_i\in\mathbb{S}^{d-1}$.
\item    Project to $1D$: $Ps_{i1} = \{r_i^Tx_1,...,r_i^Tx_n\}$ and $Ps_{i2}=\{r_i^Ty_1,...,r_i^Ty_m\}.$
\item    Compute $\PVh(Ps_{i1},Ps_{i2},\epsilon)$.
\end{enumerate}
{\bf end for}\\
 Compute $\PPV_K=\max_{i=1,...,K}\PVh(Ps_{i1},Ps_{i2},\epsilon)$.\\
 Compute
$t = \sqrt{\frac{\log(K) + 2\log(2(2^{1/\epsilon}-2))+2\log(1/\alpha)}{N}}$, for $N=\min(n,m)$. \\
{\bf Output}: Reject $\mathcal{H}_0$ if $\PPV_K > t$.\\
\end{procedure}
\end{minipage}
}

The next corollary bounds the Type 1 error of Procedure \ref{procedure2}, and shows that the test is consistent.
\begin{corollary}
Assume that the null hypothesis holds: $\mathcal{H}_0:\,PV(P,Q,\epsilon,{ \bf d})=0$.
Then, for the threshold $t$ of Procedure \ref{procedure2} and any $\alpha\in(0,1)$
we have that
\begin{align}\label{Eq_type1Proc2}
\mathbb{P}_0 \left(\PPV_K(S_1,S_2,\epsilon) \geq  t \right) \leq \alpha.
\end{align}
Moreover, when $N\rightarrow \infty, K\rightarrow \infty$ and $\frac{\log(K)}{N}\rightarrow 0$ we have that
$\mathbb{P}_1(\PPV_K(S_1,S_2,\epsilon) > t )=1$.
\end{corollary}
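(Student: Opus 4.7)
The plan is to derive both claims from Theorems \ref{Thm_PPV1type1} and \ref{Thm_PPV1type2}, which control the upper and lower tails of $\PPV_K$, respectively.

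For the Type 1 bound, I would apply Theorem \ref{Thm_PPV1type1} directly with $\delta = \alpha$. Since $\mathcal{H}_0$ assumes $PV(P,Q,\epsilon,\mathbf{d})=0$, the hypothesis of that theorem is satisfied, so with probability at least $1-\alpha$,
$$\PPV_K(S_1,S_2,\epsilon) \leq \sqrt{\tfrac{2\log(2K(2^{1/\epsilon}-2)/\alpha)}{N}}.$$
Expanding, $2\log(2K(2^{1/\epsilon}-2)/\alpha) = 2\log K + 2\log(2(2^{1/\epsilon}-2)) + 2\log(1/\alpha)$, which matches the quantity under the square root defining $t$ in Procedure \ref{procedure2} (up to the coefficient on $\log K$). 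Hence $t$ upper-bounds the quantity appearing in the theorem, and complementing the event yields (\ref{Eq_type1Proc2}).

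For consistency under the alternative $PV(P,Q,\epsilon)>0$, the argument leans on Assumption \ref{Asmp_supPPV} (1D distinguishability), which guarantees that $\lim_{K\to\infty} PPV_K(P,Q,\epsilon) \geq \eta_0$ for some fixed $\eta_0>0$. For all sufficiently large $K$ there is an index $i^\ast$ with $PPV_{i^\ast}(P,Q,\epsilon) \geq \eta_0$. Applying Theorem \ref{Thm_PVtailsInftyDist} to the one-dimensional samples $(f_{i^\ast}(S_1), f_{i^\ast}(S_2))$ yields $\PVh_{i^\ast} \geq \eta_0/2$ with probability tending to $1$ as $N\to\infty$. Since $\PPV_K \geq \PVh_{i^\ast}$, the same lower bound holds for $\PPV_K$. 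The assumption $\log(K)/N \to 0$ forces $t \to 0$, so eventually $t < \eta_0/2 \leq \PPV_K$ with probability approaching $1$, giving $\mathbb{P}_1(\PPV_K > t) \to 1$.

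The main obstacle I expect is that Theorem \ref{Thm_PPV1type2} on its own only gives a lower bound on $\PPV_K$ of order $\sqrt{\log(K)/N}$, i.e., of the same order as the threshold $t$, so a direct comparison of the two bounds does not immediately imply consistency. The resolution is exactly Assumption \ref{Asmp_supPPV}: it provides a constant, sample- and $K$-independent gap $\eta_0$ in the projected PV along some direction, and since $t \to 0$, this constant eventually dominates the vanishing threshold. One has to be mildly careful that the witnessing index $i^\ast$ may depend on $K$, but the argument only needs existence of such an index together with the concentration of $\PVh$ on the single corresponding one-dimensional pair, which is supplied by Theorem \ref{Thm_PVtailsInftyDist}.
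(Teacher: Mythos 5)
Your overall route is the same as the paper's: the Type~1 bound is meant to come directly from Theorem \ref{Thm_PPV1type1} with $\delta=\alpha$, and consistency from Assumption \ref{Asmp_supPPV} (the paper cites Theorem \ref{Thm_PPV1type2} for this part). However, your reconciliation of the theorem with the threshold of Procedure \ref{procedure2} goes in the wrong direction. With $\delta=\alpha$, Theorem \ref{Thm_PPV1type1} controls the event $\{\PPV_K \geq \eta_\alpha\}$ with $\eta_\alpha^2 = \bigl(2\log K + 2\log(2(2^{1/\epsilon}-2)) + 2\log(1/\alpha)\bigr)/N$, whereas the procedure's threshold satisfies $t^2 = \bigl(\log K + 2\log(2(2^{1/\epsilon}-2)) + 2\log(1/\alpha)\bigr)/N$. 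Hence $t \leq \eta_\alpha$ (strictly for $K>1$): $t$ \emph{lower}-bounds the theorem's quantity rather than upper-bounding it, so $\mathbb{P}_0(\PPV_K \geq \eta_\alpha)\leq\alpha$ does not yield (\ref{Eq_type1Proc2}). Running the union-bound argument of Theorem \ref{Thm_PPV1type1} directly at level $t$ gives only $\mathbb{P}_0(\PPV_K \geq t) \leq 2K(2^{1/\epsilon}-2)e^{-Nt^2/2} = \sqrt{K}\,\alpha$. The corollary is recovered exactly if the $\log(K)$ in the definition of $t$ is read as $2\log(K)$, which is what Theorem \ref{Thm_PPV1type1} produces and is evidently what is intended; as written, your sentence ``$t$ upper-bounds the quantity appearing in the theorem'' is the step that fails, and you should either flag the factor-of-two mismatch in the procedure's threshold or settle for the weaker level $\sqrt{K}\alpha$.

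For consistency your argument genuinely differs from the paper's. The paper invokes Theorem \ref{Thm_PPV1type2}: under Assumption \ref{Asmp_supPPV} the event $\PPV \geq \sqrt{2\log(2K(2^{1/\epsilon}-2)/\delta)/N} \geq t$ holds with probability at least $1-(q-q\delta+\delta)^K \rightarrow 1$ as $K\rightarrow\infty$, the condition $\log(K)/N\rightarrow 0$ being what keeps $2\tilde{\eta}$ inside the support of the projected PV so that some $q<1$ exists. Your route --- pick a witnessing projection whose population projected PV exceeds a constant $\eta_0>0$, apply Theorem \ref{Thm_PVtailsInftyDist} in one dimension to that single sample pair, and note that $t\rightarrow 0$ --- is a valid and arguably more transparent alternative, and it makes explicit why a fixed gap must eventually beat a vanishing threshold. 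Its only loose point is that Assumption \ref{Asmp_supPPV} gives an almost-sure positive limit whose value depends on the realization of the random projections, so $\eta_0$ and the witnessing index $i^\ast$ are not fixed constants; this is easily repaired by conditioning on the projections (the samples in Procedure \ref{procedure2} are drawn independently of them), applying Theorem \ref{Thm_PVtailsInftyDist} conditionally, and then passing to the limit, which still yields $\mathbb{P}_1(\PPV_K > t)\rightarrow 1$.
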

The bound is obtained by Theorem \ref{Thm_PPV1type1}.
The consistency is conditioned on Assumption \ref{Asmp_supPPV}, and obtained by Theorem \ref{Thm_PPV1type2}.

Theorem \ref{Thm_PPV1type2} bounds the Type 2 error of Procedure \ref{procedure2}, which is dependent on the number of projections $K$, and the fraction $q$ that is distribution dependent. The bound exponentially decays as $K$ grows, and therefore, to gain statistical power, a larger number of projections can be used.

\newpage
\subsection{Proof of Theorem \ref{Thm_PVtailsInftyDist}}
We restate the theorem for clarity:

\begin{reptheorem}{Thm_PVtailsInftyDist}
Suppose we are given two i.i.d. samples $S_1 =\{x_1,...,x_n\}\in \mathbb{R}^d$ and $S_2 =\{y_1,...,y_m\}\in \mathbb{R}^d$  generated by distributions $P$ and $Q$, respectively. Let the ground distance be ${\bf d}= \Vert\cdot\Vert_\infty$ and let $\mathcal{N}(\epsilon)$ be the cardinality of a disjoint cover of the distributions' support. Then,
for any $\delta\in(0,1)$, $N = \min(n,m)$, and $\eta = \sqrt{\frac{2(\log(2(2^{\mathcal{N}(\epsilon)}-2))+\log(1/\delta))}{N}}$ we have that
\begin{align*}
\mathbb{P} \left( \left|\PVh\brck{S_1,S_2,\epsilon}-  \PV\brck{P,Q,\epsilon}\right| \leq \ \eta \right) \geq 1-\delta.
\end{align*}
\end{reptheorem}

Before proving the theorem we present the required definitions and lemmas. The proofs of the lammas are presented immediately after the proof of the theorem. We assume the domain is totally bounded, and, for simplicity of presentation, we assume the metric space is $([0,1]^d , \bf{d}_\infty=\Vert\cdot\Vert_\infty)$.

We define a discretization on the support of the distributions.
\begin{definition}[Discretization]
The \textbf{$\epsilon$-discretization } over the space $([0,1]^d , \bf{d}_\infty=\Vert\cdot\Vert_\infty)$ is a partition on the set $C(\epsilon)=\{a_1,...,a_{N} \}$, with cardinality $N = (1/\epsilon)^d.$ Each element in $C(\epsilon)$ is the center of a box of volume $\epsilon^d$. The boxes do not intersect, and their union covers $[0,1]^d$. Each $a_i\in C(\epsilon)$ has a density equal to the distribution's mass in its neighborhood: $B(a_i,{ \bf d}_\infty,\epsilon)=\{z : \, {\bf d}_\infty(a_i,z)\leq \epsilon/2\}$.
\end{definition}

We refer to the resulting discretized versions of the distributions $P$
and $Q$ as $\mu_1(\epsilon)$, $\mu_2(\epsilon)$ respectively. Also, let $\hat{\mu}_1(\epsilon)$, $\hat{\mu}_2(\epsilon)$ be the \emph{histograms} of the samples $S_1$ and $S_2$, defined on the $\epsilon$-discretization $C(\epsilon)$.

The proof of Theorem \ref{Thm_PVtailsInftyDist} is based on formulating the relations between  $\PVh(S_1,S_2)$ and $\PV(\hat{\mu}_1,\hat{\mu}_1)$, and between $PV(P,Q)$ and $PV(\mu_1,\mu_2)$; then, turning to the discrete versions, bounding the difference between $\PV(\hat{\mu}_1,\hat{\mu}_1)$ and $PV(\mu_1,\mu_2)$.

The relation between the different versions of the PV, continuous, discrete and sampled, is provided in the next lemma.
\begin{lemma}
\label{PVineq}
Let $S_1 =\{x_1,...,x_n\}\sim~P$ and $S_2 =\{y_1,...,y_m\}\sim~Q$ be two samples. Let $\mu_1(\nu)$ and $\mu_2(\nu)$ be the $\nu$-discretizations of $P$ and $Q$ for any integer $T > 1$ and $\nu = \frac{\epsilon}{T}$. Let $\hat{\mu}_1(\nu)$ and $\hat{\mu}_2(\nu)$ be their empirical distributions. The following relations hold for any $\epsilon$,
$\epsilon'= \frac{\epsilon(T-1)}{T} , \epsilon{''}= \frac{\epsilon(T+1)}{T}$ and ${\bf d} =\Vert \cdot \Vert_\infty:$
\begin{align}
\PV(\hat{\mu}_1,\hat{\mu}_2, \epsilon'')
\leq
\PVh(S_1,S_2,\epsilon)
\leq
\PV(\hat{\mu}_1,\hat{\mu}_2,\epsilon') \label{PVineq1}\\
\PV(\mu_1,\mu_2,\epsilon'')
\leq
\PV(P,Q,\epsilon)
\leq
\PV(\mu_1,\mu_2,\epsilon'). \label{PVineq2}
\end{align}
\end{lemma}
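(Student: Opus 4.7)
The two chains in (\ref{PVineq1}) and (\ref{PVineq2}) rest on the same underlying idea, so I will describe the plan for (\ref{PVineq2}) in detail and then explain how it transfers to (\ref{PVineq1}). Throughout, the parameter $\nu = \epsilon/T$ is the width of a discretization bin and accounts for the maximum rounding error incurred when a point is replaced by its bin center: in $\|\cdot\|_\infty$ each coordinate is displaced by at most $\nu/2$, so the overall displacement is at most $\nu/2$.

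For the right-hand inequality $\PV(P,Q,\epsilon) \leq \PV(\mu_1,\mu_2,\epsilon')$, I would take an arbitrary coupling $\mu' \in M(\mu_1,\mu_2)$ on the discrete grid $C(\nu)\times C(\nu)$ and lift it to a coupling $\mu \in M(P,Q)$ by splitting each atomic mass $\mu'(a_k,a_\ell)$ within the corresponding pair of bins according to the conditional measures $P(\cdot \mid \mathrm{box}(a_k))$ and $Q(\cdot \mid \mathrm{box}(a_\ell))$, drawn independently of each other inside the pair. By construction, $\mu$ has marginals $P$ and $Q$. For any $(X,Y)\sim\mu$ landing in the bin pair $(a_k,a_\ell)$, two applications of the triangle inequality give $\|X-Y\|_\infty \leq \nu/2 + \|a_k-a_\ell\|_\infty + \nu/2 = \|a_k-a_\ell\|_\infty + \nu$. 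Hence $\{\|a_k-a_\ell\|_\infty \leq \epsilon'\}$ forces $\|X-Y\|_\infty \leq \epsilon'+\nu=\epsilon$, so $\mathbb{P}_\mu[\|X-Y\|_\infty>\epsilon] \leq \mathbb{P}_{\mu'}[\|a_k-a_\ell\|_\infty>\epsilon']$, and taking infima over couplings yields the claim. For the left-hand inequality, I would run the argument in reverse: push any continuous coupling $\mu\in M(P,Q)$ forward through the bin-center map $\pi_\nu$, whose image is a coupling of $\mu_1,\mu_2$ because the marginals of $\mu$ are $P,Q$. The same triangle inequality now gives $\|\pi_\nu(X)-\pi_\nu(Y)\|_\infty \leq \|X-Y\|_\infty+\nu$, so the bad event at scale $\epsilon''=\epsilon+\nu$ for the pushed coupling is contained in $\{\|X-Y\|_\infty>\epsilon\}$ under $\mu$, and the infimum comparison closes the inequality.

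The proof of (\ref{PVineq1}) is formally identical, applied to the uniform empirical measures $\tfrac{1}{n}\sum\delta_{x_i}$ and $\tfrac{1}{m}\sum\delta_{y_j}$ in place of $P$ and $Q$; their bin-center push-forwards are by definition $\hat{\mu}_1(\nu)$ and $\hat{\mu}_2(\nu)$. The only extra point to verify is that $\PVh(S_1,S_2,\epsilon)$, defined through the linear program (\ref{PVhat}), agrees with the coupling characterization of PV between the two empirical measures at scale $\epsilon$. This follows from the total unimodularity of the LP, already noted in Section \ref{estimationPV}: an integral optimum exists and realizes a bipartite matching that is exactly a coupling of atomic masses $1/n$ and $1/m$.

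The only delicate bookkeeping is how to handle bins with zero mass when conditioning in the lifting step; these bins carry no mass in $\mu'$, so the conditional measure $P(\cdot\mid\mathrm{box}(a_k))$ simply never appears and the construction is well defined. Beyond this and the standard care in reading off marginals of a product-inside-bins coupling, the proof is a clean geometric argument: every $\nu/2$-step taken from a point to its bin center, or vice versa, perturbs the interpoint distance by at most $\nu$ overall, and this accounts exactly for the gap between $\epsilon$ and the adjusted thresholds $\epsilon'=\epsilon-\nu$ and $\epsilon''=\epsilon+\nu$.
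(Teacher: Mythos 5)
Your treatment of (\ref{PVineq2}) is correct, and in fact more careful than the paper's, which dispatches the distributional inequalities with the single sentence that ``the same claims apply for the discretization of the distributions.'' Your lift of a grid coupling by independent within-bin conditionals (well defined since zero-mass bins carry no coupling mass), the push-forward through the bin-center map, and the $\nu/2+\nu/2$ displacement bookkeeping give exactly the thresholds $\epsilon'=\epsilon-\nu$ and $\epsilon''=\epsilon+\nu$, and both infimum comparisons are sound.

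The gap is in your reduction of (\ref{PVineq1}): you claim that $\PVh(S_1,S_2,\epsilon)$, defined by the program (\ref{PVhat}), ``agrees with the coupling characterization of PV between the two empirical measures,'' justified by total unimodularity. This holds only when $n=m$. When $n\neq m$, an integral optimum of (\ref{PVhat}) is a one-to-one matching of atoms of \emph{unequal} masses $1/n$ and $1/m$, which is not a coupling of the two empirical probability measures: a coupling may split atoms, whereas (\ref{PVhat}) normalizes each side separately and charges the size imbalance. Concretely, take $n=1$, $m=2$ with $y_1,y_2$ both within $\epsilon$ of $x$ (all in one bin): every coupling of $\delta_x$ and $\frac{1}{2}(\delta_{y_1}+\delta_{y_2})$ puts all mass on pairs at distance at most $\epsilon$, so the coupling PV is $0$, yet $\PVh=\frac{1}{2}\left(\frac{0}{1}+\frac{1}{2}\right)=\frac{1}{4}$; your identity, and hence your derivation of the right-hand side of (\ref{PVineq1}), fails. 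The paper's proof avoids this entirely by never leaving the LP/matching formulation: for $x_i$ in bin $a_k$ one has $\text{ng}(a_k,\epsilon')\subseteq\text{ng}(x_i,\epsilon)\subseteq\text{ng}(a_k,\epsilon'')$, so the sample-level edge set at radius $\epsilon$ is sandwiched between the bin-level edge sets at radii $\epsilon'$ and $\epsilon''$, and since enlarging the edge set can only enlarge the maximum matching and hence decrease the (suitably normalized) unmatched mass, the chain (\ref{PVineq1}) follows for arbitrary $n,m$, comparing like with like. To repair your route, either restrict to $n=m$ (rescaling by the common atom mass $1/n$ then makes the matching/coupling identification valid, modulo the standard completion of a subcoupling) or prove (\ref{PVineq1}) by this edge-monotonicity argument, keeping your coupling machinery for (\ref{PVineq2}) only.
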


We use the following structure of two discretizations.
\begin{definition}[Refinement of a discretization] \label{def_refinement}
Define an initial  $\epsilon$-discretization $C_1(\epsilon) = \{b_1,...,b_{N(\epsilon)}\}$ on $([0,1]^d, \Vert \cdot \Vert_{\infty})$. The refinement of the discretization, for any $\epsilon$ and $T>1$, is defined as a $\nu$-discretization on $C_2(\nu) = \{a_1,...,a_{N(\nu)}\}$, where $\nu=\epsilon/T$, such that each element of the refinement is a result of splitting an element of the initial cover to $(\epsilon/T)^d$ elements of equal volume.
\end{definition}

The next lemma bounds the difference between the PV on the discrete distributions $\hat{\mu}_1(\nu),\hat{\mu}_2(\nu)$ and the distributions $\mu_1(\nu)$ and $\mu_2(\nu)$.
\begin{lemma} \label{lem_discConvPV}
Let $C_1(\epsilon)$ be an $\epsilon$-discretization on $[0,1]^d$, and $C_2(\nu)$ its refined discretization (Definition \ref{def_refinement}). Let $\hat{\mu}_{i}(\epsilon)$ and $\mu_{i}(\epsilon)$ be distributions on $C_1(\epsilon)$, and
$\hat{\mu}_{i}(\nu)$ and $\mu_{i}(\nu)$  distributions on the refinement $C_2(\nu)$.  For any $\epsilon\in(0,1)$ and ${\bf d} =\Vert \cdot \Vert_\infty$ we have that
\begin{align*}
 &| \PV(\hat{\mu}_{1}(\nu),\hat{\mu}_{2}(\nu),\epsilon) -\PV(\mu_{1}(\nu),\mu_{2}(\nu),\epsilon) | \leq
   \frac{1}{2} \left( \Vert\mu_1(\epsilon)-\hat{\mu}_{1}(\epsilon) \Vert_1  + \Vert \mu_2(\epsilon)-\hat{\mu}_{2}(\epsilon) \Vert_1\right).
\end{align*}
\end{lemma}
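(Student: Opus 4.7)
The plan is to establish an $L_1$-Lipschitz stability result for the discrete $\PV$ in its marginals, and then to translate the natural $\nu$-scale $L_1$ distance into the $\epsilon$-scale $L_1$ distance that appears on the right-hand side.

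For the stability step I will view both $\PV(\mu_1^\nu,\mu_2^\nu,\epsilon)$ and $\PV(\hat\mu_1^\nu,\hat\mu_2^\nu,\epsilon)$ through the LP (\ref{PVdis}). Since both are defined on the same support $C_2(\nu)$ with the same $\epsilon$-neighborhood graph, the two linear programs share the same constraint matrix and differ only in their right-hand sides. Passing to the dual, the feasible polytope is $\{(\alpha,\beta): \alpha_i+\beta_j\le 0 \text{ whenever } d(a_i,a_j)\le\epsilon,\; \alpha_i\le 1/2,\; \beta_j\le 1/2\}$, independent of the marginals. The key technical step is to show that an optimal dual solution can be taken componentwise in $[-1/2,1/2]$: any component $\alpha_i<-1/2$ can be raised up to $-1/2$ without violating $\alpha_i+\beta_j\le 0$ (because $\beta_j\le 1/2$ gives $-\beta_j\ge -1/2$), and the dual objective $\langle\alpha,\mu_1\rangle+\langle\beta,\mu_2\rangle$ weakly increases since $\mu_1(a_i)\ge 0$; the same shift-up trick applies to $\beta_j$.

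Armed with $\|\alpha^\ast\|_\infty,\|\beta^\ast\|_\infty\le 1/2$, the Lipschitz inequality follows by weak duality: the dual optimizer $(\alpha^\ast,\beta^\ast)$ for $(\mu_1^\nu,\mu_2^\nu)$ remains feasible for $(\hat\mu_1^\nu,\hat\mu_2^\nu)$, so $\PV(\hat\mu_1^\nu,\hat\mu_2^\nu,\epsilon)\ge \PV(\mu_1^\nu,\mu_2^\nu,\epsilon)+\langle\alpha^\ast,\hat\mu_1^\nu-\mu_1^\nu\rangle+\langle\beta^\ast,\hat\mu_2^\nu-\mu_2^\nu\rangle$. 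Hölder's inequality together with the symmetric argument (swapping the roles of $\mu$ and $\hat\mu$) then gives $|\PV(\mu_1^\nu,\mu_2^\nu,\epsilon)-\PV(\hat\mu_1^\nu,\hat\mu_2^\nu,\epsilon)|\le \tfrac12\bigl(\|\mu_1^\nu-\hat\mu_1^\nu\|_1+\|\mu_2^\nu-\hat\mu_2^\nu\|_1\bigr)$.

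To reach the $\epsilon$-scale right-hand side of the lemma, I invoke the canonical lift attached to the refinement of Definition \ref{def_refinement}, in which each $\epsilon$-cell $b_k$ is split into $T^d$ equal-volume $\nu$-sub-cells and the $\epsilon$-cell mass is spread uniformly across those sub-cells. Under this convention $\mu_i^\nu(a)-\hat\mu_i^\nu(a)=T^{-d}(\mu_i^\epsilon(b_k)-\hat\mu_i^\epsilon(b_k))$ for every $a\in b_k$, and summing absolute values gives the identity $\|\mu_i^\nu-\hat\mu_i^\nu\|_1=\|\mu_i^\epsilon-\hat\mu_i^\epsilon\|_1$; substituting into the Lipschitz inequality delivers the stated bound. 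The main obstacle I anticipate is the dual bound $|\alpha_i^\ast|,|\beta_j^\ast|\le 1/2$: because the primal has equality constraints the dual variables are sign-unconstrained, and the lower bound $-1/2$ genuinely requires the shift-up argument above rather than being immediate from the explicit constraints.
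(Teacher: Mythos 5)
Your step 1 is sound: the dual of Problem (\ref{PVdis}) does have feasible set $\{\alpha_i+\beta_j\le 0 \text{ on neighbor pairs},\ \alpha_i\le 1/2,\ \beta_j\le 1/2\}$, your shift-up argument correctly yields an optimal dual point with $\Vert\alpha^\ast\Vert_\infty,\Vert\beta^\ast\Vert_\infty\le 1/2$, and weak/strong duality plus H\"older then gives the Lipschitz bound $|\PV(\mu_1(\nu),\mu_2(\nu),\epsilon)-\PV(\hat{\mu}_1(\nu),\hat{\mu}_2(\nu),\epsilon)|\le \tfrac12(\Vert\mu_1(\nu)-\hat{\mu}_1(\nu)\Vert_1+\Vert\mu_2(\nu)-\hat{\mu}_2(\nu)\Vert_1)$ --- so the obstacle you flagged as the main risk is in fact handled. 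The genuine gap is in step 2. The distributions $\mu_i(\nu)$ and $\hat{\mu}_i(\nu)$ in the lemma are the actual $\nu$-discretizations of $P$, $Q$ and of the samples; they are \emph{not} uniform-spread lifts of the $\epsilon$-scale histograms, so the claimed identity $\mu_i(\nu)(a)-\hat{\mu}_i(\nu)(a)=T^{-d}(\mu_i(\epsilon)(b_k)-\hat{\mu}_i(\epsilon)(b_k))$ is false: within a coarse cell the true mass and the empirical samples distribute over sub-cells in their own (generally very different) ways. The only valid relation is the contraction $\Vert\mu_i(\epsilon)-\hat{\mu}_i(\epsilon)\Vert_1\le\Vert\mu_i(\nu)-\hat{\mu}_i(\nu)\Vert_1$, which points the wrong way, and the discrepancy is not cosmetic: in the proof of Theorem \ref{Thm_PVtailsInftyDist} one sends $T\to\infty$, and at fine scale the empirical measure (supported on at most $N$ sub-cells) has $L_1$ distance from a continuous $P$ tending to its maximum, so your chain of inequalities becomes vacuous exactly where the lemma is needed.

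A same-scale Lipschitz property cannot, by itself, produce the coarse-scale right-hand side; the lemma's strength comes from geometry your argument never uses, namely that under $\Vert\cdot\Vert_\infty$ all sub-cells refining one $\epsilon$-cell are mutual $\epsilon$-neighbors, so intra-cell imbalances can be repaired at zero cost. The paper exploits exactly this: it takes the optimal $(Z^\ast,w^\ast,v^\ast)$ for $(\mu_1(\nu),\mu_2(\nu))$, absorbs each coarse cell's \emph{aggregate} mass error $\hat{\mu}_i(b_k)-\mu_i(b_k)$ uniformly into the slack variables (working in the absolute-value reformulation of Lemma \ref{lem_PVeqiv}, since the shifted $w,v$ may go negative), and cancels the remaining sub-cell-level fluctuations $c_i,b_j$ (which sum to zero inside each $s_k$) by adding a correction transport $C_{ij}$ supported on intra-cell neighbor pairs, whose existence is shown by an LP alternative argument subject to $Z^\ast_{ij}+C_{ij}\ge 0$. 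Note also that the natural repair of your route --- comparing $\hat{\mu}(\nu)$ to its intra-cell uniformization and applying your Lipschitz bound to the lifted pair --- fails as stated, because pre-composing an intra-cell rearrangement (moves up to $\epsilon$) with an $\epsilon$-feasible plan yields moves up to $2\epsilon$, which the cost $1_{[d>\epsilon]}$ does not forgive; controlling that would force you back into the $\epsilon'$/$\epsilon''$ bookkeeping of Lemma \ref{PVineq}, i.e., into reconstructing the paper's machinery. So your dual stability lemma is a nice self-contained fact, but as a proof of Lemma \ref{lem_discConvPV} the argument does not go through.
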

Observe that the $L_1$-norm is computed over the elements of $C_1(\epsilon)$.

We use the following result provided by \cite{weissman2003inequalities} (Theorem 2.1).
\begin{lemma}
\label{Tsachy}
Let $\mu$ be a probability distribution on the set $\mathcal{A} = {1,...,a}$. Let $X =x_1,x_2,...,x_N$ be i.i.d. random variables distributed according to $\mu$, and $\hat{\mu}_N$ the resulting empirical distribution.
Then, for $\eta>0$
$$ \mathbb{P}(\Vert \mu-\hat{\mu}_N \Vert_1 \geq \eta) \leq (2^a-2)e^{-N\eta^2/2}.$$
\end{lemma}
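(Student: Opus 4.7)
The plan is to prove this classical Weissman-type bound by reducing the $L_1$ distance to a maximum over subset probabilities, applying Hoeffding's inequality pointwise, and finishing with a union bound. The key identity is that for two probability measures on a finite alphabet $\mathcal{A}$,
$$\Vert \mu - \hat{\mu}_N \Vert_1 \;=\; 2 \max_{S \subseteq \mathcal{A}} \bigl( \mu(S) - \hat{\mu}_N(S) \bigr),$$
where the maximum is attained at $S^\star = \{ i \in \mathcal{A} : \mu(i) > \hat{\mu}_N(i) \}$. First I would verify this identity by splitting the sum defining $\Vert \mu - \hat{\mu}_N \Vert_1$ into the positive and negative parts and using that both $\mu$ and $\hat{\mu}_N$ sum to $1$, so the two parts are equal in absolute value.

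Next I would fix an arbitrary $S \subseteq \mathcal{A}$ and observe that $\hat{\mu}_N(S) = \frac{1}{N} \sum_{k=1}^N \mathbf{1}\{X_k \in S\}$ is the average of $N$ i.i.d.\ Bernoulli variables with mean $\mu(S)$, each bounded in $[0,1]$. Hoeffding's inequality then yields the one-sided tail
$$\mathbb{P}\bigl( \mu(S) - \hat{\mu}_N(S) \geq t \bigr) \;\leq\; e^{-2Nt^2}$$
for every $t > 0$.

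To combine these, I would rewrite the event of interest using the identity above: $\{\Vert \mu - \hat{\mu}_N \Vert_1 \geq \eta\}$ equals the event that there exists $S$ with $\mu(S) - \hat{\mu}_N(S) \geq \eta/2$. The subsets $S = \emptyset$ and $S = \mathcal{A}$ give difference exactly $0$ and can be discarded, leaving $2^a - 2$ candidate subsets. A union bound with $t = \eta/2$ then gives
$$\mathbb{P}\bigl( \Vert \mu - \hat{\mu}_N \Vert_1 \geq \eta \bigr) \;\leq\; \sum_{\emptyset \neq S \subsetneq \mathcal{A}} e^{-2N(\eta/2)^2} \;=\; (2^a - 2)\, e^{-N\eta^2/2},$$
which is the claimed bound.

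I do not anticipate a hard step here: the only subtle point is getting the constant $(2^a - 2)$ rather than $2(2^a - 2)$, which is handled by replacing the two-sided Hoeffding with the one-sided form and using the signed (rather than absolute) subset characterization of the $L_1$ norm. Everything else is routine.
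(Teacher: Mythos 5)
Your proof is correct. There is no internal proof to compare against: the paper does not prove this lemma, but quotes it directly from Weissman et al.\ \cite{weissman2003inequalities} (Theorem 2.1). Your argument is essentially the standard one behind that cited result: the signed subset characterization $\Vert \mu-\hat{\mu}_N\Vert_1 = 2\max_{S\subseteq\mathcal{A}}\left(\mu(S)-\hat{\mu}_N(S)\right)$, a one-sided Hoeffding bound $e^{-2N(\eta/2)^2}=e^{-N\eta^2/2}$ for each subset, and a union bound over the $2^a-2$ nontrivial subsets (correctly discarding $S=\emptyset$ and $S=\mathcal{A}$, whose deviation is identically zero). Your remark about the constant is also right: since the complement $S^c$ witnesses the opposite sign of the deviation, the union over all proper nonempty subsets with a one-sided tail already covers both signs, so no extra factor of $2$ appears. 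For completeness, the cited theorem is in fact slightly sharper --- Weissman et al.\ obtain the exponent $N\phi(\pi_\mu)\eta^2/4$ with a distribution-dependent $\phi(\pi_\mu)\geq 2$, by using the tighter Chernoff binomial tail in place of Hoeffding for each subset --- and the form stated in the paper follows from $\phi\geq 2$; your Hoeffding-based version proves exactly the inequality as stated, which is all the paper's Theorem \ref{Thm_PVtailsInftyDist} requires.
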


\vspace{0.4cm}
%proof of PV convergence
\begin{proof}[Proof of Theorem \ref{Thm_PVtailsInftyDist}]

Set $\epsilon'= \frac{\epsilon(T-1)}{T}$ and $\epsilon{''}= \frac{\epsilon(T+1)}{T}$, and
define
$$m(T) =  PV(\mu_{1}(\nu),\mu_{2}(\nu),\epsilon') -  PV(\mu_{1}(\nu),\mu_{2}(\nu),\epsilon'').$$
By Lemma \ref{PVineq}, the value of $m(T)$ is positive.
Combining Lemma \ref{PVineq} with Lemma \ref{lem_discConvPV} yields
\begin{align} \label{eq1_proofThm1}
  & \PVh(S_1,S_2,\epsilon)  \leq  \PV(\hat{\mu}_1(\nu),\hat{\mu}_2(\nu), \epsilon') \\
                                 & \leq
                                 \PV(\mu_{1}(\nu),\mu_{2}(\nu),\epsilon') +
                                 \frac{1}{2} \Vert\mu_1(\epsilon')-\hat{\mu}_{1}(\epsilon') \Vert_1
                                 +\frac{1}{2}\Vert \mu_2(\epsilon')-\hat{\mu}_{2}(\epsilon') \Vert_1 \nonumber\\
                                 &=\PV(\mu_{1}(\nu),\mu_{2}(\nu),\epsilon'') +m(T) 
+\frac{1}{2} \Vert\mu_1(\epsilon'')-\hat{\mu}_{1}(\epsilon'') \Vert_1+\frac{1}{2}\Vert \mu_2(\epsilon'')-\hat{\mu}_{2}(\epsilon'') \Vert_1\nonumber\\
                                 & \leq \PV(P,Q,\epsilon) +m(T)+
                                 \frac{1}{2} \Vert\mu_1(\epsilon')-\hat{\mu}_{1}(\epsilon') \Vert_1 
                                 +\frac{1}{2}\Vert \mu_2(\epsilon')-\hat{\mu}_{2}(\epsilon') \Vert_1.\nonumber
\end{align}

Recall that the number of elements for an $\epsilon$-discretization on $C_1(\epsilon)$ is $\mathcal{N}(\epsilon)=(1/\epsilon)^d$.
Apply Lemma \ref{Tsachy} to $\Vert\mu_1(\epsilon')-\hat{\mu}_{1}(\epsilon') \Vert_1\leq \eta$ and $\Vert\mu_2(\epsilon')-\hat{\mu}_{2}(\epsilon') \Vert_1\leq \eta$ and combine the result with (\ref{eq1_proofThm1}) using the union bound. We have that with probability at least  $1- 2(2^{(1/\epsilon')^d}-2)e^{-N\eta^2/2}$
\begin{align}\label{eq3_proofThm1}
 \PVh(S_1,S_2,\epsilon)-  \PV(P,Q,\epsilon) \leq m(T)+\eta .
\end{align}
In a similar manner we have
\begin{align}\label{eq4_proofThm1}
  &\PVh(S_1,S_2,\epsilon)   \geq  \PV(\hat{\mu}_1(\nu),\hat{\mu}_2(\nu),\epsilon{''})  \\
                                 & \geq \PV(\mu_{1}(\nu),\mu_{2}(\nu),\epsilon{''}) -
                                 \frac{1}{2} \Vert\mu_1(\epsilon{''})-\hat{\mu}_{1}(\epsilon{''}) \Vert_1
                                 -\frac{1}{2}\Vert \mu_2(\epsilon{''})-\hat{\mu}_{2}(\epsilon{''}) \Vert_1\nonumber\\
                                 &= \PV(\mu_{1}(\nu),\mu_{2}(\nu),\epsilon{'})-m(T) 
                                 -\frac{1}{2} \Vert\mu_1(\epsilon{''})-\hat{\mu}_{1}(\epsilon{''}) \Vert_1
                                 -\frac{1}{2}\Vert \mu_2(\epsilon{''})-\hat{\mu}_{2}(\epsilon{''}) \Vert_1 \nonumber\\
                                 &\geq \PV(P,Q,\epsilon) -m(T)
                                  -\frac{1}{2} \Vert\mu_1(\epsilon{''})-\hat{\mu}_{1}(\epsilon{''}) \Vert_1
                                  -\frac{1}{2} \Vert \mu_2(\epsilon{''})-\hat{\mu}_{2}(\epsilon{''}) \Vert_1.\nonumber
\end{align}
Combining the result with the tail bounds of  $\hat{\mu}_{1}$, $\hat{\mu}_{2}$ from Lemma \ref{Tsachy}, and applying the union bound, we have that with probability at least  $1- 2(2^{(1/\epsilon'')^d}-2)e^{-N\eta^2/2}$
 \begin{align}\label{eq5_proofThm1}
\PV(P,Q,\epsilon)- \PVh(S_1,S_2,\epsilon)  \leq m(T)+\eta.
\end{align}

For $T \gg \epsilon$ we have that $\epsilon' \approx \epsilon''=\epsilon$ , and therefore the value of $m(T)\rightarrow 0$ as $T\rightarrow\infty$.
Taking $T\rightarrow\infty$ in (\ref{eq3_proofThm1}) and (\ref{eq5_proofThm1}), and combining the result
we get that for any $\delta\in(0,1)$ and $\eta = \sqrt{\frac{2(\log(2(2^{(1/\epsilon)^d}-2))+\log(1/\delta))}{N}}.$
\begin{align*}
\mathbb{P} \left( \left|\PVh(S_1,S_2,\epsilon)-  \PV(P,Q,\epsilon)\right| \geq \eta \right) \leq \delta.
\end{align*}
\end{proof}

\subsection*{Proofs of Lemmas \ref{PVineq},\ref{lem_discConvPV}}

\begin{proof}{ Lemma \ref{PVineq}}

Let sample $x_i\in S_1$ belong to the element $a_k$ in the $\nu$-discretization, that is $x_i\in B(a_k,\Vert\cdot\Vert_\infty,\nu=\frac{\epsilon}{T})$.
Recall that the $\epsilon$-neighborhood of a sample $x_i$ is the set $\text{ng}(x_i,\epsilon)=\{z: d(x_i,z)\leq \epsilon\}$, and the $\frac{\epsilon(T+1)}{T}$-neighborhood of bin $a_k$ is the set $\text{ng}(a_k,\frac{\epsilon(T+1)}{T})=\{z: d(a_k,z)\leq \frac{\epsilon(T+1)}{T}\}$.
For the left side of (\ref{PVineq1}), observe that for any such $x_i$ there exists values of $z$ such that $ \Vert z-a_k \Vert_{\infty} \leq \frac{\epsilon(T+1)}{T}$ but $\Vert z-x_i \Vert_{\infty} > \epsilon$, while for any $z$ for which $\Vert z-x_i \Vert_{\infty} \leq \epsilon$
also $ \Vert z-a_k \Vert_{\infty} \leq \frac{\epsilon(T+1)}{T}$.
As a result,
$\text{ng}(x_i,\epsilon) \subseteq \text{ng}(a_k,\frac{\epsilon(T+1)}{T})$.
Enlarging the number of neighbors adds edges to the bipartite graph describing the problem, and accordingly, a matching with a larger cardinality may be found.
In such a case, the number of unmatched samples is decreased, and therefore the PV is decreased, as it is the normalized sum of the unmatched samples.

For the right hand side of (\ref{PVineq1}), observe that when the discretization is $\frac{\epsilon(T-1)}{T}$, for any point $x_i\in B(a_k,\Vert\cdot\Vert_\infty,\nu)$ we have that $\text{ng}(x_i,\epsilon) \supseteq \text{ng}(a_k,\frac{\epsilon(T-1)}{T}),$
as the $\epsilon$-neighborhood of each point mass encloses the $\frac{\epsilon(T-1)}{T}$-neighborhood of its ascribed bin. As a result, the PV between the histograms $\hat{\mu}_1$ and $\hat{\mu}_2$ may correspond to a graph that has less edges, which may result in a maximum matching with a smaller cardinality. As a result, the discrete version may have a larger PV.\\
Inequalities (\ref{PVineq2}) hold, as the same claims apply for the discretization of the distributions.
\end{proof}

The following representation of Problem (\ref{PVdis}) will be useful for the proof of Lemma \ref{lem_discConvPV}.
\begin{lemma}\label{lem_PVeqiv}
The solution of Problem (\ref{PVdis}) may be obtained by solving the following problem
 \begin{align}\label{PVdisEqiv}
& \min _{ w_i,v_i,Z_{ij}}  \frac{1}{2}\sum_{i=1}^{N}|w_i| + \frac{1}{2}\sum_{j=1}^{N}|v_j| \\
& \sum_{a_j\in \text{ng}(a_i,\epsilon)} Z_{ij}+w_i = \mu_1(a_i), \,\, i = 1,...,N \nonumber\\
& \sum_{a_i\in \text{ng}(a_j,\epsilon)} Z_{ij}+v_j = \mu_2(a_j), \,\, j = 1,...,N \nonumber\\
& \,\,Z_{ij} \geq 0 ,\,\, \forall i,j, \nonumber
\end{align}
 which we call $\PV_{eq}(\mu_{1}(\nu),\mu_{2}(\nu),\epsilon).$
\end{lemma}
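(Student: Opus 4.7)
My plan is to prove Lemma \ref{lem_PVeqiv} by establishing both inequalities $\PV \leq \PV_{eq}$ and $\PV_{eq} \leq \PV$ between the optima of the two programs. The easy direction is $\PV_{eq} \leq \PV$: any triple $(w,v,Z)$ feasible for problem (\ref{PVdis}) already satisfies $w_i \geq 0$ and $v_j \geq 0$, hence $|w_i|=w_i$ and $|v_j|=v_j$, so the same triple is feasible for (\ref{PVdisEqiv}) with identical objective value. The minimum over the (at least as large) feasible set of (\ref{PVdisEqiv}) is therefore no larger.

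The substantive direction is $\PV \leq \PV_{eq}$. I would argue it by a mass-shifting reduction: starting from any feasible solution $(w,v,Z)$ of (\ref{PVdisEqiv}), construct a feasible solution $(w',v',Z')$ of (\ref{PVdis}) whose objective value is no larger. The key observation is that whenever $w_i < 0$, the first equality constraint forces
\[
\sum_{a_j \in \text{ng}(a_i,\epsilon)} Z_{ij} \;=\; \mu_1(a_i) - w_i \;=\; \mu_1(a_i)+|w_i|,
\]
so there is at least $|w_i|$ of transportable mass in the row. Pick any $\alpha_{ij}\in[0,Z_{ij}]$ supported on $\text{ng}(a_i,\epsilon)$ with $\sum_j \alpha_{ij}=|w_i|$, and set $Z'_{ij}=Z_{ij}-\alpha_{ij}$, $w'_i=0$, $v'_j=v_j+\alpha_{ij}$ (and leave all other entries unchanged). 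A direct check shows both equality constraints still hold and $Z'\geq 0$. By the triangle inequality $|v_j+\alpha_{ij}|\leq |v_j|+\alpha_{ij}$, so the net change of the objective is at most
\[
\tfrac{1}{2}\Bigl(-|w_i|+\sum_j \alpha_{ij}\Bigr) \;=\; 0,
\]
with strict decrease if some affected $v_j$ was negative. The symmetric operation handles any $v_j<0$.

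To finish I note the procedure is monotone: the modified entries satisfy $w'_i=0\geq 0$ and $v'_j\geq v_j$, so previously non-negative entries remain non-negative, and the count of non-negative entries among $\{w_i,v_j\}$ strictly increases after each step. Hence after at most $2N$ iterations all $w_i,v_j$ are non-negative, and the resulting triple is feasible for (\ref{PVdis}) with objective no larger than the starting value from (\ref{PVdisEqiv}).

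The main obstacle I anticipate is purely bookkeeping: one must verify that the corrective moves never push a previously corrected entry back to being negative, and that the triangle-inequality bound truly yields a non-positive net change for every individual step so that iteration is justified. Once the monotonicity of the reduction is clearly articulated, the lemma follows immediately.
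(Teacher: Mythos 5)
Your proposal is correct and follows essentially the same route as the paper: the easy direction comes from the inclusion of feasible sets (any solution of (\ref{PVdis}) is feasible for (\ref{PVdisEqiv}) with the same value), and the substantive direction is a constructive conversion of a solution of (\ref{PVdisEqiv}) into a feasible solution of (\ref{PVdis}) by absorbing each negative $w_i$ (resp.\ $v_j$) into the transported mass $Z$, with the triangle inequality controlling the objective. The only difference is bookkeeping: you push the excess of row $i$ into the $v_j$ of the columns it was drawn from and iterate with a monotonicity/termination argument, whereas the paper performs a one-shot per-index case analysis pairing $w_i$ with $v_i$ and leaves the existence of the adjusted $Z$ largely implicit; your variant makes the feasibility of the modified $Z$ and the non-increase of the objective somewhat more transparent, but the underlying idea is the same.
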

The lemma states that the constraints $w_i\geq 0 ,\,\, v_j\geq 0$ may be removed, and instead the sum in the objective is taken over the absolute values.
\begin{proof}{ Lemma \ref{lem_PVeqiv}}

First note that any solution of Problem (\ref{PVdis}) is a feasible solution of Problem (\ref{PVdisEqiv}), and so we have that the optimum  $\PV(\mu_1(\nu),\mu_2(\nu),\epsilon) \geq \PV_{eq}(\mu_{1}(\nu),\mu_{2}(\nu),\epsilon)$. We construct a solution of (\ref{PVdis}) that realizes the equality, and therefore is optimal. Namely, to show the problems are equivalent it is sufficient to show that any solution of (\ref{PVdisEqiv}) has a corresponding solution of (\ref{PVdis}) with the same objective value.

Let $w_i,v_j,Z_{ij}$ be the solution to (\ref{PVdisEqiv}). In the following, we construct a feasible solution $\tilde{w}_i,\tilde{v}_i,\tilde{Z}_{ij}$ to (\ref{PVdis}):
\begin{align*}
& \text{ \textbf{If} $w_i<0$ and $v_i>0$ set $\Delta_i=|w_i|$ and }\\
& \tilde{w}_i = w_i+\Delta_i=0, \,\, \tilde{v}_i = v_i+\Delta_i>0, 
\sum_{a_j\in \text{ng}(a_i)}\tilde{Z}_{ij} =\sum_{a_j\in \text{ng}(a_i)}Z_{ij}-\Delta_i.\\
& \text{ \textbf{If} $v_i<0$ and $w_i>0$ set $\Gamma_j=|v_j|$ and}\\
& \tilde{v}_i = v_i+\Gamma_i=0, \,\, \tilde{w}_i = w_i+\Gamma_i>0, 
 \sum_{a_j\in \text{ng}(a_i)}\tilde{Z}_{ji} =\sum_{a_j\in \text{ng}(a_i)}Z_{ji}-\Gamma_i.\\
& \text{ \textbf{If} both $w_i<0$ and $v_i<0$ set} \\
&\tilde{w}_i = w_i+\Delta_i+\Gamma_i>0, \,\, \tilde{v}_i = v_i+\Delta_i+\Gamma_i>0,
\sum_{a_j\in \text{ng}(a_i)}(\tilde{Z}_{ij} +\tilde{Z}_{ji}) =\sum_{a_j\in \text{ng}(a_i)}(Z_{ij}+Z_{ji})-\Delta_i-\Gamma_i.\\
& \text{\textbf{Otherwise},  set $\tilde{w}_i=w_i$, $\tilde{v}_j=v_j$, and $\tilde{Z}_{ij} =Z_{ij}.$}
\end{align*}
The resulting  $\tilde{w}_i,\tilde{v}_j,\tilde{Z}_{ij}$ obey the equality constraints in (\ref{PVdis}) while fixing $\tilde{w}_i\geq 0 ,\,\, \tilde{v}_j\geq 0$. It is easy to show that there exists $\tilde{Z}_{ij} \geq 0$ that obeys the equalities above.
The objective value of (\ref{PVdisEqiv}) with $w_i,v_j,Z_{ij}$ and of (\ref{PVdis}) with $\tilde{w}_i,\tilde{v}_j,\tilde{Z}_{ij}$ is equal:
\begin{align*}
& \sum_{i=1}^{N}\tilde{w_i} +\sum_{j=1}^{N}\tilde{v}_j =
\sum_{i=1}^{N}(w_i+v_i) 1_{[w_i\geq0 \, , \, v_i\geq 0 ]} + 
\sum_{i=1}^{N}((w_i+\Delta_i)+v_i+\Delta_i) 1_{[w_i< 0 \, , \, v_i\geq 0 ]}+\\
&
\sum_{j=1}^{N}(w_j+\Gamma_j +(v_j+\Gamma_j)) 1_{[w_j\geq 0 \, , \, v_j< 0 ]}+
 \sum_{i=1}^{N}((w_i+\Delta_i+\Gamma_i) +(v_i+\Gamma_i+\Delta_i) ) 1_{[w_i< 0 \, , \, v_i< 0 ]}\\
%& =\sum_{i=1}^{N}(w_i+v_i) 1_{[w_i\geq0 \, , \, v_i\geq 0 ]} +
%\sum_{i=1}^{N}(v_i+|w_i|) 1_{[w_i< 0 \, , \, v_i\geq 0 ]}\\
%&
%\sum_{j=1}^{N}(w_j+|v_j|) 1_{[w_j\geq 0 \, , \, v_j< 0 ]}+
%\sum_{i=1}^{N}(|v_i|+|w_i|) ) 1_{[w_i< 0 \, , \, v_i< 0 ]}\\
& = \sum_{i=1}^{N}|w_i| +\sum_{j=1}^{N}|v_j|.
\end{align*}

We conclude that $\tilde{w}_i,\tilde{v}_j,\tilde{Z}_{ij}$ attains  the optimal solution to Problem (\ref{PVdis}).
\end{proof}

\begin{proof}{ Lemma \ref{lem_discConvPV}}

Let $Z^*_{ij},w^*_i,v^*_j$ be the optimal arguments for which $\PV(\mu_{1},\mu_{2},\epsilon)$ is obtained (Problem (\ref{PVdis})). There are two stages to bounding the difference between $\PV(\mu_{1},\mu_{2},\epsilon)$ and $\PV(\hat{\mu}_{1},\hat{\mu}_{2},\epsilon)$.
First, by Lemma \ref{lem_PVeqiv} we know that given a solution $\PV_{eq}(\hat{\mu}_{1},\hat{\mu}_{2},\epsilon)$ we can find an equivalent solution $\PV(\hat{\mu}_{1},\hat{\mu}_{2},\epsilon)$. As a result, we may bound the difference between $\PV(\mu_{1},\mu_{2},\epsilon)$ and $\PV_{eq}(\hat{\mu}_{1},\hat{\mu}_{2},\epsilon)$ instead of the difference between $\PV(\mu_{1},\mu_{2},\epsilon)$ and $\PV(\hat{\mu}_{1},\hat{\mu}_{2},\epsilon)$.
To bound this difference, we change the solution $Z^*_{ij},w^*_i,v^*_j$ to describe a feasible solution to Problem (\ref{PVdisEqiv}) for distributions $\hat{\mu}_{1}$ and $\hat{\mu}_{2}$.

To obtain a feasible solution to Problem (\ref{PVdisEqiv}), we must fix the violations that are made to its constraints by substituting $Z^*_{ij},w^*_i,v^*_j$ into Problem (\ref{PVdisEqiv}).
The constraints are fixed in two manners. Some constraints are fixed by optimizing the transportation plan, described by matrix $Z$, within the refinement of the discretization. Additional violations are fixed by changing the variables $w_j$ and $v_j$.

Define $s_k = \{ a_i : \,  a_i\in B(b_k, \Vert \cdot \Vert _{\infty},\epsilon)  \}$; i.e., the set of bins $a_i\in C_2(\nu)$ that are a refinement of element $b_k\in C_1(\epsilon)$ (Definition \ref{def_refinement}). Let $|s_k|$ be the cardinality of this set. By definition, all the bins in $s_k$ are $\epsilon$-neighbors: $\forall a_i\in s_k , \, s_k \in \text{ng}(a_i,\epsilon)$.

For any $a_i,a_j\in s_k$, consider the following feasibility problem:
\begin{align}
& \text{Find } \,  C_{ij} \label{Cij_Feasibility}\\
 & \sum_{a_j\in  s_k } C_{ij} = c_i, \quad \forall a_i\in s_k, \nonumber\\
 & \sum_{a_i\in  s_k } C_{ij} = b_j,  \quad \forall a_j\in s_k, \nonumber\\
 & Z_{ij}^* +C_{ij} \geq 0,  \quad\quad \forall a_i,a_j\in s_k, \nonumber
\end{align}
where
\begin{align*}
& c_i \doteq (\hat{\mu}_1(a_i)-\mu_1(a_i))-\frac{1}{|s_k|}(\hat{\mu}_1(b_k)-\mu_1(b_k)),\\
& b_j \doteq (\hat{\mu}_2(a_j)-\mu_2(a_j))-\frac{1}{|s_k|}(\hat{\mu}_2(b_k)-\mu_2(b_k)).
\end{align*}
Note that $c_i$ and $b_i$ may be positive or negative, and that $\sum_{a_i\in s_k}c_i = 0$ and $\sum_{a_j\in s_k}b_j = 0$.

We show that the following values $\bar{w}_i,\bar{v}_j,\bar{Z}_{ij}$ for $i,j=1,...,N(\nu)$ are a feasible solution to Problem (\ref{PVdisEqiv}).
\begin{align} \label{feasPVeq}
 \bar{w}_i     &= w_{i}^*
                +\frac{1}{|s_k|}(\hat{\mu}_1(b_k)-\mu_1(b_k))
               \\
 \bar{v}_j     & = v_{j}^*
                +\frac{1}{|s_k|}(\hat{\mu}_2(b_k)-\mu_2(b_k))
\nonumber\\
\bar{Z}_{ij} &=
\begin{cases}
Z_{ij}^*         & \text{if $a_j\in s_k^c ,\, a_i\in s_k$,}
\nonumber\\
Z_{ij}^* +C_{ij} & \text{if $a_j\in s_k,   \, a_i\in s_k$,}
\end{cases}
\end{align}
where $C_{ij}$ is the solution to the (\ref{Cij_Feasibility}).

First, we show that Problem (\ref{Cij_Feasibility}) is feasible. To do so, we consider its dual representation.
Define $v=\text{Vec}(\{C_{ij}\}_{a_i,a_j\in s_k})\in \mathbb{R}^{|s_k|^2 \times 1}$, the vector form of the sub-matrix $\{C_{ij}\}_{a_i,a_j\in s_k}$.
Similarly, let $z^* = \text{Vec}(\{Z^*_{ij}\}_{a_i,a_j\in s_k})\in \mathbb{R}^{|s_k|^2\times 1}$. Let $A\in \mathbb{R}^{2|s_k|\times |s_k|^2}$ be the zero-one matrix defined by the left-hand sides of the equality constraints in (\ref{Cij_Feasibility}), and  $d = [c_1,...,c_{|s_k|},b_1,...,b_{|s_k|}]^T\in \mathbb{R}^{2|s_k|\times 1}$, the vector defined by the right-hand sides of these constraints.
Using these notations, Problem (\ref{Cij_Feasibility}) is equivalent to
\begin{align*}
& \text{Find } \,  v \\
& Av = d \, , \quad -v-z^* \leq 0,
\end{align*}
whose dual representation is the existence of  $\lambda\in \mathbb{R}^{|s_k|^2\times 1},\eta \in \mathbb{R}^{2|s_k|\times 1}$ for which
\begin{align} \label{Cij_Dual}
& g(\lambda,\eta) = \inf_{v} \lambda^T (-v-z^*)+ \eta^T(Av-d) >0,\\
& \lambda \geq 0. \nonumber
\end{align}
The value of $ g(\lambda,\eta)$ in (\ref{Cij_Dual}) is not $-\infty$ only when $A^T\eta - \lambda = 0$, for which
\begin{align*}
& g(\lambda,\eta) = \inf_{v} \lambda^T (-v-z^*)+ \eta^T(Av-d) = \\
& \inf_{v} v^T(-\lambda+A^T\eta)- \lambda^Tz^*-\eta^Td = -\lambda^Tz^*-\eta^Td.
\end{align*}
Since $z^*\geq 0$ and $\lambda \geq 0$, we have that $-\lambda^Tz^*\leq 0$. By noting that ${\bf 1}^Td = \sum_{a_i\in s_k}c_i+\sum_{a_j\in s_k}b_j=0$, we have that $-\eta^Td \leq -\min{\eta_\ell}\cdot{\bf1}^Td =0$. We conclude that  $g(\lambda,\eta)\leq 0$, and therefore Problem (\ref{Cij_Dual}) is infeasible. By the theorem of alternatives \cite{Boyd2004} Problem (\ref{Cij_Feasibility}) is feasible.

Next, we show that the proposed solution $\bar{Z}_{ij},\bar{w_i},\bar{v_j}$ is indeed a feasible solution of Problem (\ref{PVdisEqiv}).
The constraints $\bar{Z}_{ij}\geq 0$ hold by the feasibility of (\ref{Cij_Feasibility}).
The equality constraints also hold:
\begin{align*}
 &\sum_{a_j\in \text{ng}(a_i,\epsilon)} \bar{Z}_{ij} +\bar{w}_i =
   \sum_{a_j\in \text{ng}(a_i,\epsilon)} {Z}^*_{ij}
  + \sum_{a_j\in s_k}C_{ij}
  + \bar{w}_i=\\
  &\sum_{a_j\in \text{ng}(a_i,\epsilon)} {Z}^*_{ij} +c_i+\bar{w}_i=\sum_{a_j\in\text{ng}(a_i,\epsilon)}{Z}^*_{ij}
  +\hat{\mu}_1(a_i)-\mu_1(a_i)\\
  &-\frac{1}{|s_k|}(\hat{\mu}_1(b_k)-\mu_1(b_k))
    +w^*_i+\frac{1}{|s_k|}(\hat{\mu}_1(b_k)-\mu_1(b_k))\\
  & = \mu_1(a_i) +(\hat{\mu}_1(a_i)-\mu_1(a_i)) =\hat{\mu}_1(a_i),
 \end{align*}
and
\begin{align*}
 &\sum_{a_i\in \text{ng}(a_j,\epsilon)} \bar{Z}_{ij} +\bar{v}_j =
   \sum_{a_i\in \text{ng}(a_j,\epsilon)} {Z}^*_{ij}
  + \sum_{a_i\in s_k}C_{ij}
  + \bar{v}_j=\\
  & \sum_{a_i\in \text{ng}(a_j,\epsilon)} {Z}^*_{ij} +b_j+\bar{v}_j= \sum_{a_i\in\text{ng}(a_j,\epsilon)}{Z}^*_{ij}
   +\hat{\mu}_2(a_j)-\mu_2(a_j)\\
  &-\frac{1}{|s_k|}(\hat{\mu}_2(b_k)-\mu_2(b_k))
  +v^*_j +\frac{1}{|s_k|}(\hat{\mu}_2(b_k)-\mu_2(b_k))\\
  & = \mu_2(a_j) +(\hat{\mu}_2(a_j)-\mu_2(a_j)) =\hat{\mu}_2(a_j).
 \end{align*}

To conclude the proof, we bound the difference of the objective of Problem (\ref{PVdis}), obtained with the values  $Z^*_{ij},w^*_i,v^*_j$, and  the objective of Problem (\ref{PVdisEqiv}), obtained with the values $\bar{Z}_{ij},\bar{w_i},\bar{v_j}$.

Since the discretization defined on $C_1(\nu)$ is a refinement of $C_2(\epsilon)$ (Definition \ref{def_refinement}), we have that
\begin{align}\label{eq1diff}
\sum_{i=1}^{N(\nu)}\left(|\bar{w}_i|+|\bar{v}_i|\right)=\sum_{k=1}^{N(\epsilon)}\sum_{a_i\in s_k}( |\bar{w}_i| + |\bar{v}_i|).
\end{align}
Substituting the values of $|\bar{w}_i|,|\bar{v}_i|$ by their assignment in (\ref{feasPVeq}) we obtain
\begin{align}\label{eq2diff}
        &\frac{1}{2}\sum_{i=1}^{N(\nu)}( |\bar{w}_i| + |\bar{v}_i|)                     -\frac{1}{2}\sum_{i=1}^{N(\nu)}\left(w_i^*+ v_i^*\right)=\\
        &\frac{1}{2}\sum_{k=1}^{N(\epsilon)}\sum_{a_i\in s_k}( |\bar{w}_i| + |\bar{v}_i|)-\frac{1}{2}\sum_{i=1}^{N(\nu)}\left(w_i^*+ v_i^*\right)=
           \nonumber\\
             &\frac{1}{2}\sum_{k=1}^{N(\epsilon)}\sum_{a_i\in s_k} | w_i^*+\frac{1}{|s_k|}(\hat{\mu_1}(b_k)-\mu_1(b_k))|+ 
            \frac{1}{2}\sum_{k=1}^{N(\epsilon)}\sum_{a_i\in s_k} |v_i^*+\frac{1}{|s_k|}(\hat{\mu_2}(b_k)-\mu_2(b_k))|-
            \frac{1}{2}\sum_{i=1}^{N(\nu)}\left(w_i^*+ v_i^*\right).\nonumber
\end{align}
Applying the triangle inequality on each element in the sum:
 $$| w_i^*+\frac{1}{|s_k|}(\hat{\mu_1}(b_k)-\mu_1(b_k))|\leq | w_i^*|+\frac{1}{|s_k|}|\hat{\mu_1}(b_k)-\mu_1(b_k)|$$
 $$|v_i^*+\frac{1}{|s_k|}(\hat{\mu_2}(b_k)-\mu_2(b_k))|\leq |v_i^*|+\frac{1}{|s_k|}|\hat{\mu_2}(b_k)-\mu_2(b_k)|,$$
  as well as noting that $w_i^*,v_j^*\geq0$ by definition, we have that
\begin{align}\label{eq3diff}
 & \frac{1}{2}\sum_{i=1}^{N(\nu)}\left(|\bar{w}_i|+|\bar{v}_i|\right)-\frac{1}{2}\sum_{i=1}^{N(\nu)}\left(w_i^*+ v_i^*\right) \leq  
%            & =\frac{1}{2}\sum_{i=1}^{N(\nu)} w_i^* + \frac{1}{2}\sum_{k=1}^{N(\epsilon)}|\hat{\mu_1}(b_k)-\mu_1(b_k)|\\
%            & + \frac{1}{2}\sum_{i=1}^{N(\nu)}v_i^*+ \frac{1}{2}\sum_{k=1}^{N(\epsilon)}|\hat{\mu_2}(b_k)-\mu_2(b_k)|
%            -\frac{1}{2}\sum_{i=1}^{N(\nu)}\left(w_i^*+ v_i^*\right)
            \frac{1}{2} \Vert \hat{\mu}_1(\epsilon) - \mu_1(\epsilon) \Vert_1 + \frac{1}{2} \Vert \hat{\mu}_2(\epsilon) - \mu_2(\epsilon) \Vert_1.\nonumber
\end{align}

By Lemma \ref{lem_PVeqiv} we have that the solution of Problem (\ref{PVdis}) may be obtained by solving Problem (\ref{PVdisEqiv}). Therefore,
combining (\ref{eq3diff}) with Lemma \ref{lem_PVeqiv} we have that
\begin{align*}
& PV(\hat{\mu}_1(\nu),\hat{\mu}_2(\nu),\epsilon)-PV(\mu_1(\nu),\mu_2(\nu),\epsilon)= PV_{eq}(\hat{\mu}_1(\nu),\hat{\mu}_2(\nu),\epsilon)-PV(\mu_1(\nu),\mu_2(\nu),\epsilon)\leq  \\
 &\frac{1}{2}\sum_{i=1}^{N(\nu)}\left(|\bar{w}_i|+|\bar{v}_i|\right)-\frac{1}{2}\sum_{i=1}^{N(\nu)}\left(w_i^*+ v_i^*\right)\leq
\frac{1}{2} \Vert \hat{\mu}_1(\epsilon) - \mu_1(\epsilon) \Vert_1 + \frac{1}{2} \Vert \hat{\mu}_2(\epsilon) - \mu_2(\epsilon) \Vert_1.
\end{align*}
The first inequality holds as the solution $\bar{Z}_{ij},\bar{w_i},\bar{v_j}$ is a feasible solution of Problem (\ref{PVdisEqiv}), but may not be optimal.

Using an analogous procedure starting at the optimal solution  $PV(\hat{\mu}_1(\nu),\hat{\mu}_2(\nu),\epsilon)$ and finding a feasible solution for distributions $\mu_{1}(\nu),\mu_{2}(\nu)$ we obtain
\begin{align*}
PV(\mu_{1}(\nu),\mu_{2}(\nu),\epsilon) -PV(\hat{\mu}_{1}(\nu),\hat{\mu}_{2}(\nu),\epsilon) \leq   \Vert\mu_1(\epsilon)-\hat{\mu}_{1}(\epsilon) \frac{1}{2}\Vert_1+\frac{1}{2} \Vert \mu_2(\epsilon)-\hat{\mu}_{2}(\epsilon) \Vert_1.
\end{align*}
Combining the last two inequalities concludes the proof of Lemma \ref{lem_discConvPV}.
\end{proof}

\subsection{Proof of Theorem \ref{Thm_Spheres}}
We restate the theorem:
\begin{reptheorem}{Thm_Spheres}
 Let $P=Q$ be the uniform distribution on $\mathbb{S}^{d-1}$, a unit ($d-1$)--dimensional hyper-sphere.  Let $S_1 =\{x_1,...,x_N\}\sim P$ and $S_2 =\{y_1,...,y_N\}\sim Q$ be two i.i.d. samples. For any $\epsilon,\epsilon',\delta\in(0,1)$, $0\leq\eta<2/3$ and sample size
 $ \frac{\log(1/\delta)}{2(1-3\eta/2)^2}\leq N\leq \eta/2 e^{d(1-\frac{\epsilon^2}{2})/2},$
 we have $PV\brck{P,Q,\epsilon'}=0$ and
\begin{align}
\mathbb{P}(\PVh\brck{S_1,S_2,\epsilon} > \eta ) \geq  1-\delta.
\end{align}
\end{reptheorem}

\begin{proof}
We use the following definitions and lemmas.

\begin{definition}
The spherical cap of radius $r$ about a point $x$ is
$$C(r,x) = \left\{ z\in S^{d-1} \, : d(z,x) \leq r \right\}.$$
\end{definition}

\begin{lemma}
The spherical cap of radius $r$ about a point $x$ on a unit sphere is equal to
$$C(r,x) = \left\{ z\in S^{d-1} \, : <z,x> \, \geq \sqrt{1-\frac{r^2}{2}} \right\}.$$
\end{lemma}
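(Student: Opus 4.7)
The statement is an elementary characterization of a spherical cap via inner products, so the plan is a direct algebraic manipulation using the polarization identity in the ambient Euclidean space $\mathbb{R}^d$. I would start from the definition $C(r,x) = \{z \in S^{d-1} : d(z,x) \leq r\}$ and simply rewrite the distance constraint as a constraint on $\langle z, x\rangle$.

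Concretely, for any $z, x \in S^{d-1}$, expand
\begin{align*}
\|z - x\|^2 = \|z\|^2 + \|x\|^2 - 2\langle z, x\rangle = 2 - 2\langle z, x\rangle,
\end{align*}
using $\|z\| = \|x\| = 1$. Taking $d$ to be the Euclidean (chordal) distance on the sphere, this gives $d(z,x)^2 = 2(1 - \langle z,x\rangle)$, equivalently $\langle z, x\rangle = 1 - d(z,x)^2/2$. Since $r \mapsto r^2$ is monotone on $[0,\infty)$, the condition $d(z,x) \leq r$ is equivalent to $1 - \langle z,x\rangle \leq r^2/2$, i.e.\ $\langle z,x\rangle \geq 1 - r^2/2$. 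This immediately yields an inner-product characterization of the cap, and the two set inclusions follow by reading the equivalence in each direction.

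The only obstacle is bookkeeping about the precise form of the threshold: the direct computation above produces $\langle z,x\rangle \geq 1 - r^2/2$, whereas the statement displays $\sqrt{1 - r^2/2}$. I would reconcile this by checking the paper's convention for the distance $d$ on the sphere (Euclidean chord length versus, e.g., $\sqrt{d_{\mathrm{chord}}}$ or an angular normalization), and if needed substitute $r \mapsto r^2$ or $r \mapsto 2\sin(r/2)$ to align the algebra with the stated threshold. Apart from this convention check, the argument is two lines of elementary algebra on the unit sphere and contains no substantive obstacle; the lemma is then applied downstream to evaluate the measure of a cap by integrating over the region $\{z : \langle z,x\rangle \geq \tau\}$, which admits closed-form bounds used in Theorem \ref{Thm_Spheres}.
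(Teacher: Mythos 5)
Your two-line computation is correct and is the natural (indeed, essentially the only) argument here; note also that the paper states this lemma bare, with no proof of its own, so there is nothing to match step by step. The real issue is the one you flagged and then deferred: your derivation does \emph{not} yield the printed statement, and the ``convention check'' you propose cannot rescue it. For unit vectors the chordal identity $\Vert z-x\Vert^2 = 2-2\langle z,x\rangle$ gives the threshold $\langle z,x\rangle \geq 1-\tfrac{r^2}{2}$, as you computed. No convention plausibly intended by the paper produces $\sqrt{1-\tfrac{r^2}{2}}$: the geodesic (angular) distance gives threshold $\cos r$; taking $r$ to be the radius of the base disk of the cap gives $\sqrt{1-r^2}$; and solving backwards, $d(z,x)\leq r \iff \langle z,x\rangle \geq \sqrt{1-\tfrac{r^2}{2}}$ forces $d(z,x)=\sqrt{2}\,\sin\theta$ with $\theta$ the angle between $z$ and $x$ --- a quantity used nowhere in the paper and not even a metric on the sphere (it vanishes at antipodal points). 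So the lemma as printed is misstated; the correct statement for the Euclidean distance is $C(r,x)=\bigl\{z\in S^{d-1} : \langle z,x\rangle \geq 1-\tfrac{r^2}{2}\bigr\}$.

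A complete proof must then carry the correction downstream rather than stop at the algebra, because the error propagates. Lemma \ref{Lem_capBound} invokes the standard cap-measure estimate, that $\{z:\langle z,x\rangle\geq \tau\}$ has measure at most $e^{-d\tau^2/2}$, with $\tau=\sqrt{1-\tfrac{r^2}{2}}$, and Lemma \ref{Lem_EmptyNg} and Theorem \ref{Thm_Spheres} inherit the exponent $d(1-\tfrac{\epsilon^2}{2})/2$. With the corrected threshold $\tau = 1-\tfrac{\epsilon^2}{2}$ the measure bound becomes $e^{-d(1-\epsilon^2/2)^2/2}$, and since $(1-\tfrac{\epsilon^2}{2})^2 < 1-\tfrac{\epsilon^2}{2}$ for $\epsilon\in(0,1)$, the bound the paper actually uses is strictly stronger than what the correct lemma supports. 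The repair is routine but mandatory: replace $1-\tfrac{\epsilon^2}{2}$ by $(1-\tfrac{\epsilon^2}{2})^2$ in the exponent of Lemma \ref{Lem_EmptyNg} and in the sample-size window of Theorem \ref{Thm_Spheres}, i.e.\ $N \leq \tfrac{\eta}{2}\, e^{d(1-\epsilon^2/2)^2/2}$; the qualitative conclusion --- a sample-size threshold exponential in $d$ --- is unaffected. Your proposal correctly identifies the discrepancy, which is more than the paper does, but leaving it as an unresolved bookkeeping item is the gap: the check fails under every convention consistent with the rest of the paper, so the statement itself must be amended.
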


\begin{lemma} \label{Lem_capBound}
Let $\eta=\sqrt{1-\frac{r^2}{2}}$. For $0\leq\eta<1$, the cap $C(r,x)$ on $S^d-1$ has a measure at most $e^{-d\eta^2/2}.$
\end{lemma}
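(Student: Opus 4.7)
The plan is to reduce the measure of the cap to a one-dimensional integral via rotational symmetry, then bound that integral using elementary inequalities. First, since the uniform surface measure on $S^{d-1}$ is invariant under orthogonal transformations, I may take $x = e_1$ without loss of generality, so that (by the preceding lemma, which re-expresses the cap through inner products) the set becomes $\{z \in S^{d-1} : z_1 \geq \eta\}$ and its measure equals $\mathbb{P}(Z_1 \geq \eta)$ for $Z$ uniform on $S^{d-1}$.

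Next, I would invoke the standard fact that the pushforward of the uniform surface measure onto the first coordinate has density $f(t) = c_d(1-t^2)^{(d-3)/2}$ on $[-1,1]$, with normalizing constant $c_d = \Gamma(d/2)/(\sqrt{\pi}\,\Gamma((d-1)/2))$ (this follows either from Archimedes' hat-box theorem in $d$ dimensions or directly from a spherical-coordinates computation). Hence
\begin{equation*}
\mu(C(r,x)) = c_d \int_\eta^1 (1-t^2)^{(d-3)/2}\, dt.
\end{equation*}
Using the elementary inequality $1-t^2 \leq e^{-t^2}$ (an immediate consequence of $e^x \geq 1+x$) gives $(1-t^2)^{(d-3)/2} \leq e^{-(d-3)t^2/2}$. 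Combining this with the observation that $t \geq \eta$ on the interval of integration, together with a sub-Gaussian tail estimate such as $\int_\eta^\infty e^{-at^2/2}\, dt \leq e^{-a\eta^2/2}/(a\eta)$, yields a bound of the form $c_d \cdot (\text{polynomial factor in } d, 1/\eta) \cdot e^{-(d-3)\eta^2/2}$.

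Finally, I would estimate $c_d$ via Stirling (or equivalently the Wallis ratio) as $c_d \leq \sqrt{d/(2\pi)}$ up to lower-order corrections, and combine everything to arrive at $e^{-d\eta^2/2}$. The main technical obstacle is the constant bookkeeping: absorbing the polynomial prefactors and the residual $e^{3\eta^2/2}$ coming from the $(d-3)$-versus-$d$ discrepancy in the exponent. The clean workaround is to split into cases: for small $\eta$ the target bound $e^{-d\eta^2/2}$ is close to $1$ and is trivially met by $\mu(C) \leq 1/2$, while for moderate-to-large $\eta$ the polynomial overhead is easily absorbed by the slightly looser exponent. An arguably slicker alternative is to apply Lévy's concentration-of-measure inequality on $S^{d-1}$ directly to the $1$-Lipschitz function $z \mapsto \langle z,x\rangle$, whose median is $0$ by symmetry; this delivers the exponential tail $e^{-d\eta^2/2}$ immediately and sidesteps the explicit Gamma-function estimates altogether.
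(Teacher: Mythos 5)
The paper itself contains no proof of this lemma: it is stated inside the proof of Theorem \ref{Thm_Spheres} as a known spherical-cap estimate (it is, verbatim, Lemma 2.2 of Ball's \emph{An Elementary Introduction to Modern Convex Geometry}), so your proposal must be judged on its own merits rather than against an in-paper argument. Your main route is sound in outline: the reduction to $\mathbb{P}(Z_1\geq\eta)$, the marginal density $c_d(1-t^2)^{(d-3)/2}$, and the bound $(1-t^2)^{(d-3)/2}\leq e^{-(d-3)t^2/2}$ are all correct, and they do produce a bound of the form $C(d,\eta)\,e^{-(d-3)\eta^2/2}$. The genuine soft spot is exactly where you place it --- the constant bookkeeping --- but your proposed repair does not close it as stated. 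The regime where the prefactor $c_d/((d-3)\eta)$ exceeds $1$ is $\eta\lesssim \mathrm{const}/\sqrt{d}$, i.e. $d\eta^2=O(1)$; there the target $e^{-d\eta^2/2}$ is a constant that can sit well below $1/2$ (e.g. $e^{-2}\approx 0.135$), so ``trivially met by $\mu(C)\leq 1/2$'' fails on an intermediate band. The fix is to use the tail bound $\int_\eta^\infty e^{-at^2/2}\,dt\leq\sqrt{\pi/(2a)}\;e^{-a\eta^2/2}$ (valid for all $\eta\geq 0$, since $t^2-\eta^2\geq(t-\eta)^2$) in the small-$\eta$ regime, which with $c_d\leq\sqrt{(d-1)/(2\pi)}$ gives prefactor $\frac{1}{2}\sqrt{(d-1)/(d-3)}\,e^{3\eta^2/2}\leq 1$ for, say, $\eta\leq 0.68$ and moderate $d$; your $1/(a\eta)$ estimate then handles large $\eta$, and a handful of small dimensions ($d\leq 6$, where $(d-3)$ degenerates) need direct verification. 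So: workable, but more delicate than the sketch suggests.

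Two further points. Your Lévy alternative is not as immediate as claimed: standard statements of Lévy's inequality carry a multiplicative prefactor ($2$, or $\sqrt{\pi/8}$) and exponent $(d-1)\eta^2/2$ or $(d-2)\eta^2/2$, so you obtain something like $2e^{-(d-1)\eta^2/2}$ rather than the stated $e^{-d\eta^2/2}$; since the paper uses the lemma only inside an asymptotic lower-bound construction, any $Ce^{-cd\eta^2}$ would in fact serve Theorem \ref{Thm_Spheres}, but it does not literally prove the lemma as written. The clean proof of the exact statement is Ball's volume comparison, which avoids Gamma functions and case analysis in $\eta\sqrt{d}$ entirely: for $\eta\leq 1/\sqrt{2}$, every point of the cone $\{tz : z\in C(r,x),\ t\in[0,1]\}$ satisfies $|tz-\eta x|^2\leq t^2-2t\eta^2+\eta^2\leq 1-\eta^2$, so the cone lies in a ball of radius $\sqrt{1-\eta^2}$, whence $\mu(C(r,x))\leq(1-\eta^2)^{d/2}\leq e^{-d\eta^2/2}$; for $\eta>1/\sqrt{2}$ center the ball at $x/(2\eta)$ with radius $1/(2\eta)$ and check $2\eta\geq e^{\eta^2/2}$ on that range. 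Three lines, exact constant --- worth knowing as the canonical argument behind the lemma the paper cites without proof.
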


Let $p =\mathbb{P}(\text{ng}_{S_2}(x)= \emptyset)$ be the probability of an empty neighbor set.
The next lemma bounds this probability.
\begin{lemma}\label{Lem_EmptyNg}
The probability of an empty neighbor set $\mathbb{P}(\text{ng}_{S_2}(x)= \emptyset)\geq 1-  Ne^{-d(1-\frac{\epsilon^2}{2})/2}.$
\end{lemma}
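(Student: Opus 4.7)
The plan is a one-line argument combining the two preceding lemmas via a union bound. Observe that $\text{ng}_{S_2}(x)=\emptyset$ is exactly the event that no $y_j$ lands inside the spherical cap $C(\epsilon,x)$, so the complementary event decomposes as $\bigcup_{j=1}^{N}\{y_j\in C(\epsilon,x)\}$.

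First, I would rewrite
\[
\mathbb{P}(\text{ng}_{S_2}(x)=\emptyset)
= 1-\mathbb{P}\bigl(\exists\,j\in\{1,\dots,N\}:\, y_j\in C(\epsilon,x)\bigr),
\]
and apply the union bound to the right-hand side to get
\[
\mathbb{P}\bigl(\exists\,j:\, y_j\in C(\epsilon,x)\bigr)
\le \sum_{j=1}^{N}\mathbb{P}(y_j\in C(\epsilon,x))
= N\cdot \sigma\bigl(C(\epsilon,x)\bigr),
\]
where $\sigma$ denotes the uniform probability measure on $\mathbb{S}^{d-1}$ and the equality uses that the $y_j$ are i.i.d.\ uniform.

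Next, I apply the cap measure bound (Lemma \ref{Lem_capBound}) with $r=\epsilon$, which gives $\eta=\sqrt{1-\epsilon^2/2}$ and hence $\sigma(C(\epsilon,x))\le e^{-d(1-\epsilon^2/2)/2}$. Combining with the previous display yields
\[
\mathbb{P}(\text{ng}_{S_2}(x)=\emptyset)\ge 1-Ne^{-d(1-\epsilon^2/2)/2},
\]
which is the claimed bound. There is no real obstacle here: the only subtlety is making sure the parameterization matches the preceding lemma (i.e.\ that the $\eta$ in Lemma \ref{Lem_capBound} instantiated at $r=\epsilon$ produces precisely the exponent $d(1-\epsilon^2/2)/2$), which is immediate from the definition $\eta=\sqrt{1-r^2/2}$.
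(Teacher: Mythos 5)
Your proof is correct and follows exactly the paper's own argument: rewrite $\mathbb{P}(\text{ng}_{S_2}(x)=\emptyset)$ as one minus the probability that some $y_j$ lies in the cap $C(\epsilon,x)$, apply the union bound over the $N$ i.i.d.\ points, and invoke Lemma \ref{Lem_capBound} with $r=\epsilon$ (so $\eta^2=1-\epsilon^2/2$) to bound the cap measure by $e^{-d(1-\epsilon^2/2)/2}$. Your parameter check that the exponent matches is the only subtlety, and you handled it correctly.
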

\begin{proof}
\begin{align*}
p = &\mathbb{P}( \text{ng}_{S_2}(x) =\emptyset )  =1 - \mathbb{P}( \text{ng}_{S_2}(x) \neq \emptyset )
                                                  =1 - \mathbb{P}(\exists y_j\in S_2 \,\, ;  y_j \in C(\epsilon,x_i))\\
                                    & \geq 1 - N\mathbb{P}(y \in C(\epsilon,x))
                                   \geq 1-  Ne^{-d(1-\frac{\epsilon^2}{2})/2},
\end{align*}
where the first inequality is due to the union bound, and the second by Lemma \ref{Lem_capBound}.
\end{proof}

We consider the probability that the $\widehat{PV}$  is grater than some $0\leq\eta <1$.
Note, that since $PV(P,Q)=0$ this is also the difference between the empirical and distributional PV.
Let $e = \{ x_i\in S_1 : \, \text{ng}_{S_2}(x_i)= \emptyset\}$ be the set of samples in $S_1$ without neighbors, and $N_e$ its cardinality.
\begin{align}\label{eq_cap1}
\mathbb{P}(\widehat{PV}(S_1,S_2,\epsilon)> \eta )  &\geq \mathbb{P}( \frac{N_e}{N} > \eta ) =  1- \mathbb{P}( N_e \leq N\eta )\geq 1- \mathbb{P}( N_e \leq \lceil N\eta \rceil)\\
&=1 - \sum_{i=0}^{\lceil N\eta \rceil} {N\choose i}(p)^i(1-p)^{N-i}.\nonumber
\end{align}
The first inequality holds, as $\widehat{PV}(S_1,S_2,\epsilon)> \eta$ is obtained when $N_e>\eta N$ samples from $S_1$ have no neighbors from $S_2$ in their $\epsilon$-neighborhood. Note that since $n=m$ there are also exactly $N_e$ sample from $S_2$ which are not matched.

By Chernoff's inequality we have that
\begin{align}\label{eq_Hoeff}
\sum_{i=0}^{\lceil N\eta \rceil} {n\choose i}(1-p)^ip^{N-i}\leq \exp(-2N(p-\eta)^2).
\end{align}

Combining Equations (\ref{eq_cap1}) and (\ref{eq_Hoeff}) we get
\begin{align}\label{eq_cap2}
&\mathbb{P}(\widehat{PV}(S_1,S_2,\epsilon)> \eta )  \geq 1 - \exp(-2N(p-\eta)^2) .
\end{align}

By Lemma \ref{Lem_EmptyNg}, we have that $p\geq 1-  Ne^{-d(1-\frac{\epsilon^2}{2})/2}$.
%Let $N$ be large enough so that $1-Ne^{-d(1-\frac{\epsilon^2}{2})/2}\geq0$.
%If $\eta<0.5$ and $Ne^{-d(1-\frac{\epsilon^2}{2})/2}<\eta $, we have that
%$$p-\eta \geq 1- Ne^{-d(1-\frac{\epsilon^2}{2})/2}-\eta> 1-2\eta\geq0.$$
%Inserting the last inequality to (\ref{eq_cap2}):
%$$ \mathbb{P}(\widehat{PV}(S_1,S_2,\epsilon)> \eta ) \geq 1 -\exp(-2N(1-2\eta )^2).$$
%For any $N,d$ and $\eta<0.5$ for which $2N(1-2\eta )^2 \geq \log(\frac{1}{\delta})$, we have $\mathbb{P}(\widehat{PV}(S_1,S_2,\epsilon)> \eta )>1-\delta.$

If $0\leq\eta<2/3$ and $Ne^{-d(1-\frac{\epsilon^2}{2})/2}<\eta/2 $, we have that
$$p-\eta \geq 1- Ne^{-d(1-\frac{\epsilon^2}{2})/2}-\eta>1-3\eta/2>0.$$
Substituting the last inequality to (\ref{eq_cap2}):
$$ \mathbb{P}(\widehat{PV}(S_1,S_2,\epsilon)> \eta ) \geq 1 -\exp(-2N(1-3\eta/2)^2).$$
The theorem statement is obtained for any $N,d$ and $\eta$ for which  $2N(1-3\eta/2)^2 \geq \log(\frac{1}{\delta})$.
\end{proof}

\end{document}